\newtheorem{lemma}{Lemma}
\newtheorem{theorem}{Theorem}
\newtheorem{definition}{Definition}
\newcommand{\Le}{\left(}
\newcommand{\Ri}{\right)}
\newcommand{\N}{\mathbb{N}}
\author{} 
\runningauthor{Tomi Silander, Janne Lepp\"{a}-aho, Elias J\"{a}\"{a}saari, Teemu Roos}
\begin{document}

\twocolumn[

\aistatstitle{Quotient Normalized Maximum Likelihood Criterion for Learning Bayesian Network Structures}

\aistatsauthor{ Tomi Silander$^1$ \And Janne Lepp\"{a}-aho$^{2,4}$ \And Elias J\"{a}\"{a}saari$^{2,3}$ \And Teemu Roos$^{2,4}$   }

\aistatsaddress{ $^1$NAVER LABS Europe \And $^2$Helsinki Institute for \\ Information Technology HIIT \And $^3$Department of CS,\\Aalto University \And $^4$Department of CS,\\University of Helsinki} ]


\begin{abstract}
We introduce an information theoretic criterion for Bayesian network
structure learning which we call quotient normalized maximum
likelihood (qNML). In contrast to the closely related factorized
normalized maximum likelihood criterion, qNML satisfies the property
of score equivalence. It is also decomposable and completely free
of adjustable hyperparameters. For practical computations, we identify
a remarkably accurate approximation proposed earlier by Szpankowski
and Weinberger. Experiments on both simulated and real data
demonstrate that the new criterion leads to parsimonious models with
good predictive accuracy.
\end{abstract}

\section{INTRODUCTION}
\label{sec:intro}
Bayesian networks~\cite{Pear88} are popular models for presenting
multivariate statistical dependencies that may have been induced by
underlying causal mechanisms.  Techniques for learning the structure
of Bayesian networks from observational data have therefore been used
for many tasks such as discovering cell signaling pathways from
protein activity data~\cite{bn4sigpath02}, revealing the business
process structures from transaction logs ~\cite{bn4bpmining} and
modeling brain-region connectivity using fMRI
data~\cite{bn4brainconnect}.

Learning the structure of statistical dependencies can be seen as a
model selection task where each model is a different hypothesis about
the conditional dependencies between sets of variables. Traditional
model selection criteria such as the Akaike information criterion
(AIC)~\cite{Akai73} and the Bayesian information criterion
(BIC)~\cite{Schw78} have also been used for the task, but recent
comparisons have not been favorable for AIC, and BIC appears to
require large sample sizes in order to identify appropriate
structures~\cite{cosco.pgm08a,Liu2012}. Traditionally, the most
popular criterion has been the Bayesian marginal
likelihood~\cite{Heck95b} and its BDeu variant (see
Section~\ref{sec:bns}), but studies~\cite{cosco.uai07,Steck08} show
this criterion to be sensitive to hyperparameters and to
yield undesirably complex models for small sample sizes.

The information-theoretic normalized maximum likelihood (NML)
criterion~\cite{Shta87,Riss96a} would otherwise be a potential
candidate for a good criterion, but its exact calculation is likely to
be prohibitively expensive. In 2008, Silander et al. introduced a
hyperparameter-free, NML inspired criterion called the factorized NML
(fNML)~\cite{cosco.pgm08a} that was shown to yield good predictive
models without such sensitivity problems. However, from the structure
learning point of view, fNML still sometimes appears to yield
overly complex models. In this paper we introduce another NML related
criterion, the \textit{quotient NML} (qNML) that yields simpler models
without sacrificing predictive accuracy. Furthermore, unlike fNML,
qNML is \textit{score equivalent}, i.e., it gives equal scores to
structures that encode the same independence and dependence
statements. Like other common model selection criteria, qNML is also
consistent.

We next briefly introduce Bayesian networks and review the BDeu and
fNML criteria and then introduce the qNML criterion.  We also
summarize the results for 20 data sets to back up our claim that qNML
yields parsimonious models with good predictive capabilities. The
experiments with artificial data generated from real-world Bayesian networks demonstrate the capability of our score to quickly learn a structure close to the generating one.

\section{BAYESIAN NETWORKS}
\label{sec:bns}

Bayesian networks are a general way to describe the dependencies
between the components of an $n$\nobreakdash-dimensional random data
vector. In this paper we only address the case in which the component
$X_{i}$ of the data vector $X=(X_{1},\ldots,X_{n})$ may take any of
the discrete values in a set $\{1,\ldots,r_{i}\}$.  Despite denoting
the values with small integers, the model will treat each $X_i$
as a categorical variable.

\subsection{Likelihood}
\label{ssec:likelihood}

A Bayesian network $B=(G,\theta)$ defines a probability distribution for
$X$. The component $G$ defines the structure of the model as a
directed acyclic graph (DAG) that has exactly one node for each component of
$X$. The structure $G=(G_{1},\ldots,G_{n})$ defines for each
variable/node $X_{i}$ its (possibly empty) parent set $G_{i}$, i.e.,
the nodes from which there is a directed edge to the variable
$X_{i}$.

Given a realization $x$ of $X$, we denote the sub\nobreakdash-vector
of $x$ that consists of the values of the parents of $X_{i}$ in $x$ by
$G_{i}(x)$. It is customary to enumerate all the possible
sub\nobreakdash-vectors $G_{i}(x)$ from $1$ to $q_{i}=\prod_{h\in
  G_{i}}r_{h}.$ In case $G_{i}$ is empty, we define $q_{i}=1$ and
$P(G_{i}(x)=1)=1$ for all vectors $x$.

For each variable $X_{i}$ there is a $q_{i}\times r_{i}$ table
$\theta_{i}$ of parameters whose $k^{\textnormal{th}}$ column
on the $j^{\textnormal{th}}$ row $\theta_{ij}$ defines the conditional
probability $P(X_{i}=k\mid G_{i}(X)=j;\theta)=\theta_{ijk}$.  With
structure $G$ and parameters $\theta$, we can now express the
likelihood function of the model as
\begin{equation}
P(x|G,\theta)=\prod_{i=1}^{n}P(x_{i}\mid
G_{i}(x);\theta_{i})=\prod_{i=1}^{n}\theta_{iG_{i}(x)x_{i}}.
\end{equation}

\subsection{Bayesian Structure Learning}

Score-based Bayesian learning of Bayesian network structures evaluates the goodness of different structures $G$ using their
posterior probability $P(G|D,\alpha)$, where $\alpha$ denotes the
hyperparameters for the model parameters $\theta$, and $D$ is a
collection of $N$ $n$\nobreakdash-dimensional i.i.d. data vectors
collected to a $N\times n$ design matrix. We use the notation $D_i$ to
denote the $i^\text{th}$ column of the data matrix and the notation $D_V$ to denote
the columns that correspond to the variable subset $V$. We also write
$D_{i,G_i}$ for $D_{\{i\}\cup G_i}$ and denote the entries of the
column $i$ on the rows on which the parents $G_i$ contain the value
configuration number $j$ by $D_{i,G_i=j}$, $j\in\{1,\ldots,q_i\}$.

It is common to assume the uniform prior for structures, in which case
the objective function for structure learning is reduced to the
marginal likelihood $P(D|G,\alpha)$.  If the model parameters
$\theta_{ij}$ are further assumed to be independently Dirichlet
distributed only depending on $i$ and $G_{i}$ and the data $D$ is
assumed to have no missing values, the marginal likelihood can be
decomposed as
\begin{eqnarray}
\label{eqn:bayesmix}
\lefteqn{P(D|G,\alpha)}\nonumber\\
&&=\prod_{i=1}^{n}\prod_{j=1}^{q_i}P(D_{i,G_i=j};\alpha)\nonumber\\
&&=\prod_{i=1}^{n}\prod_{j=1}^{q_i}\int P(D_{i,G_i=j}|\theta_{ij})P(\theta_{ij};\alpha) d\theta_{ij}.
\end{eqnarray}
In coding terms this means that each data column $D_i$ is first
partitioned based on the values in columns $G_i$, and each part is
then coded using a Bayesian mixture that can be expressed in closed
form~\cite{Bunt91, Heck95}. 

\begin{figure}
\centering
\includegraphics[width=8cm,height=5cm]{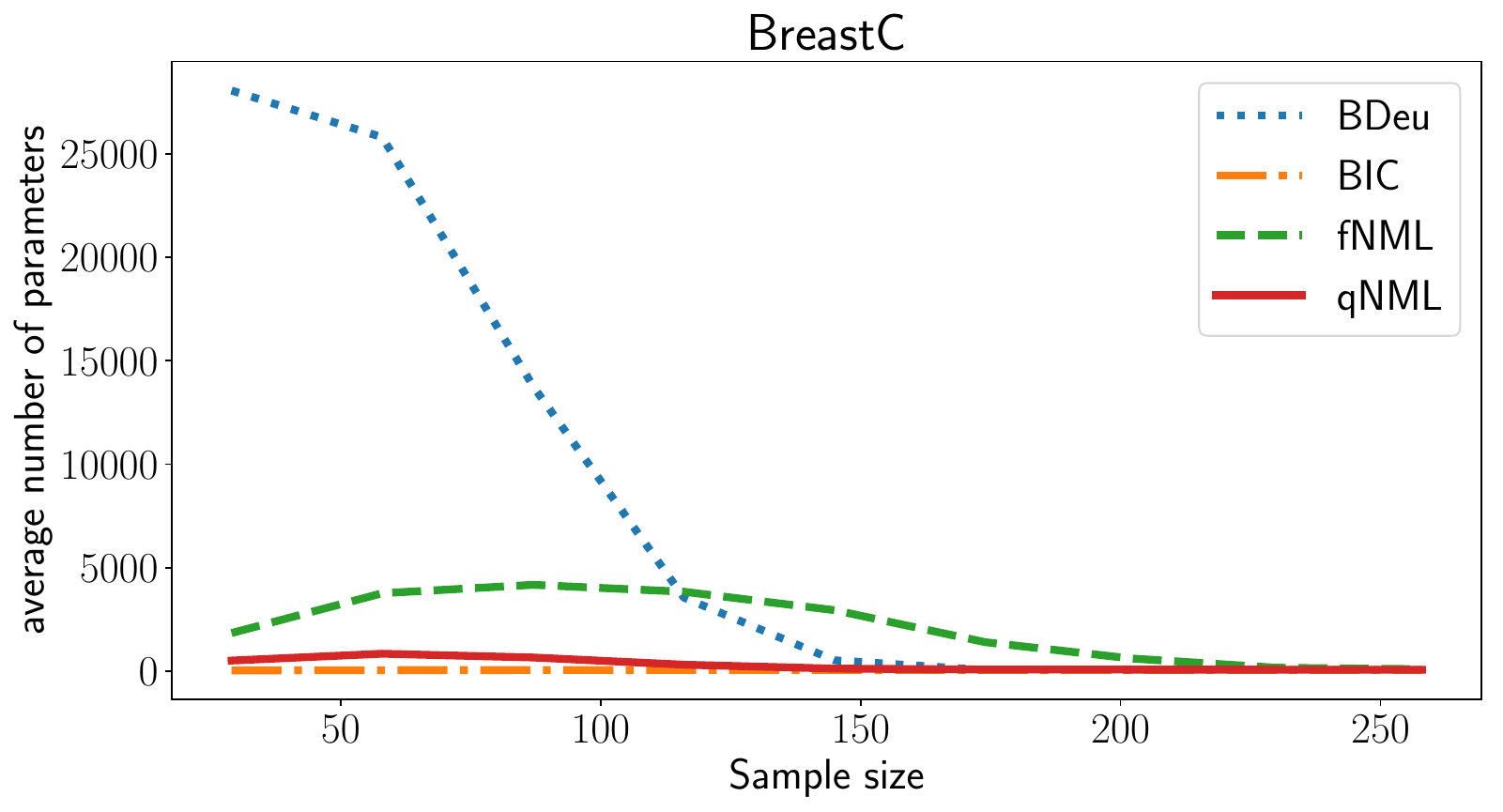}
\caption{Number of parameters in a breast cancer model as a function
  of sample size for different model selection criteria.}
\label{fig:bcnpmean}
\end{figure}

\subsection {Problems, Solutions and Problems}

Finding satisfactory Dirichlet hyperparameters for the Bayesian
mixture above has, however, turned out to be problematic. Early on,
one of the desiderata for a good model selection criterion was that it
is score equivalent, i.e., it would yield equal scores for
essentially equivalent models~\cite{Verm90}.  For example, the score
for the structure $X_1\rightarrow X_2$ should be the same as the score
for the model $X_2 \rightarrow X_1$ since they both correspond to the
hypothesis that variables $X_1$ and $X_2$ are statistically dependent
on each other.  It can be shown~\cite{Heck95} that to achieve this,
not all the hyperparameters $\alpha$ are possible and for practical
reasons Buntine~\cite{Bunt91} suggested a so-called BDeu score with
just one hyperparameter $\alpha\in R_{++}$ so that
$\theta_{ij\cdot}\sim Dir(\frac{\alpha}{q_i
  r_i},\ldots,\frac{\alpha}{q_i r_i})$.  However, it soon turned out
that the BDeu score was very sensitive to the selection of this
hyperparameter~\cite{cosco.uai07} and that for small sample sizes this
method detects spurious correlations~\cite{Steck08} leading to models
with suspiciously many parameters.

Recently, Suzuki~\cite{Suzuki2017} discussed the theoretical
properties of the BDeu score and showed that in certain settings BDeu
has the tendency to add more and more parent variables for a child node
even though the empirical conditional entropy of the child given the
parents has already reached zero. In more detail, assume that in our
data $D$ the values of $X_i$ are completely determined by variables
in a set $Z$, so that the empirical entropy $H_N(X_i | Z)$ is zero. Now, if we can further
find one or more variables, denoted by $Y$, whose values are
determined completely by the variables in $Z$, then BDeu will prefer
the set $Z\cup Y$ over $Z$ alone as the parents of $X_i$. Suzuki
argues that this kind of behavior violates \textit{regularity} in model
selection as the more complex model is preferred over a simpler one
even though it does not fit the data any better. The phenomenon seems
to stem from the way the hyperparameters for the Dirichlet
distribution are chosen in BDeu as using Jeffreys' prior,
$\theta_{ijk}\sim Dir(\frac{1}{2},\ldots,\frac{1}{2})$, does not suffer
from this anomaly. However, using Jeffreys' prior causes marginal
likelihood score not to be score equivalent. In Section \ref{sec:regularity}, we will give the formal definition of regularity and state that qNML is regular. In addition, we provide a proof of regularity for fNML criterion, which has not appeared in the literature before. The detailed proofs can be found in Appendix B in the Supplementary Material.     

A natural solution to avoid parameter sensitivity of BDeu would be to
use a normalized maximum likelihood (NML)
criterion~\cite{Shta87,Riss96a}, i.e., to find the structure $G$ that
maximizes
\begin{equation}
P_{NML}(D;G)=\frac{P(D|\hat\theta(D;G))}{\sum_{D'}{P(D'|\hat\theta(D';G))}},
\end{equation}
where $\hat\theta$ denotes the (easy to find) maximum likelihood
parameters and the sum in the denominator goes over all the possible
$N\times n$ data matrices. This information-theoretic NML criterion
can be justified from the minimum description length point of view
\cite{Riss78,Grun07}. It has been shown to be robust with respect to
different data generating mechanisms where a good choice of prior
is challenging, see~\cite{eggeling2014robust,maatta16}. While it is
easy to see that the NML criterion satisfies the requirement of giving
equal scores to equal structures, the normalizing constant renders the
computation infeasible.

Consequently, Silander et al.~\cite{cosco.pgm08a}
suggested solving the BDeu parameter sensitivity problem by using the
NML code for the column partitions, i.e., changing the Bayesian mixture
in equation~(\ref{eqn:bayesmix}) to
\begin{equation}
P^1_{NML}(D_{i,G_i=j};G)=\frac{P(D|\hat\theta(D_{i,G_i=j};G))}{\sum_{D'}{P(D'|\hat\theta(D';G))}},
\end{equation}
where $D'\in{\{1,\ldots,r_i\}}^{|D_{i,G_i=j}|}$.  The logarithm of the
denominator is often called the regret, since it indicates the extra
code length needed compared to the code length obtained using the (a
priori unknown) maximum likelihood parameters. The regret for
$P^1_{NML}$ depends only on the length $N$ of the categorical data
vector with $r$ different categorical values,
\begin{equation}
reg(N,r)=\log \sum_{D\in \{1,\ldots,r\}^N} P(D|\hat\theta(D)).
\end{equation}
While the naive
computation of the regret is still prohibitive, Silander et al.\
approximate it efficiently using a so-called Szpankowski
approximation~\cite{cosco.aistat03}:
\begin{eqnarray}
\label{eqn:szp1}
\lefteqn{reg(N,r) \approx \frac{\sqrt{2} r \Gamma{\left(\frac{r}{2} \right)}}
                               {3 \sqrt{N} \Gamma{\left(\frac{r-1}{2}  \right)}}} \\
&&+ \left(\frac{r-1}{2}\right) \log{\left (\frac{N}{2} \right )}
- \log \Gamma{\left(\frac{r}{2} \right)} + \frac{1}{2} \log{\left (\pi \right )}\nonumber\\
&&- \frac{r^{2} \Gamma^{2}{\left(\frac{r }{2} \right)}}
         {9N \Gamma^{2}{\left(\frac{r-1}{2}\right)}}
+ \frac{2r^3-3r^2-2r+3}{36N}\nonumber.
\end{eqnarray}

However, equation (\ref{eqn:szp1}) is derived only for the case
where $r$ is constant and $N$ grows. While with
fNML it is typical that $N$ is large compared to $r$, an
approximation for all ranges of $N$ and $r$ derived by
Szpankowski and Weinberger~\cite{Szpankowski2012} can also be used:
\begin{eqnarray}
\label{eqn:szp2}
    reg(N, r) & \approx & N\left(\log{\alpha} + (\alpha + 2) \log{C_\alpha}
                - \frac{1}{C_\alpha}\right)\nonumber \\
    && - \frac{1}{2} \log{\left(C_\alpha + \frac{2}{\alpha}\right)},
\end{eqnarray}
where $\alpha = \frac{r}{N}$ and
    $C_\alpha = \frac{1}{2} + \frac{1}{2} \sqrt{1 + \frac{4}{\alpha}}$.
These approximations are compared in Table \ref{tbl:regrets} to the
exact regret for various values of $N$ and $r$.  For a constant $N$,
equation (\ref{eqn:szp1}) provides a progressively worse approximation
as $r$ grows. Equation (\ref{eqn:szp2}) on the other hand is a good
approximation of the regret regardless of the ratio of $N$ and $r$.
In our experiments, we will use this approximation for 
implementation of the qNML criterion.

fNML solves the parameter sensitivity problem and yields predictive
models superior to BDeu.  However, the criterion does not satisfy the
property of giving the same score for models that correspond to the
same dependence statements. The score equivalence is usually viewed desirable when DAGs are considered only as models for conditional independence, without any causal interpretation. Furthermore, the learned structures are
often rather complex (see Figure~\ref{fig:bcnpmean}) which also
hampers their interpretation. The quest for a model selection
criterion that would yield more parsimonious, easier to interpret, but
still predictive Bayesian networks structures is one of the main
motivations for this work.

\begin{table}
\caption{Regret values for various values of $N$ and $r$.}
\label{tbl:regrets}
\begin{center}
\begin{tabular}{crrrr}
N & r & eq. (\ref{eqn:szp1}) & eq. (\ref{eqn:szp2}) & exact \\
\midrule
\multirow{4}{*}{50} & 10 & 13.24 & 13.26 & 13.24 \\
& 100 & 62.00 & 60.01 & 60.00 \\
& 1000 & 491.63 & 153.28 & 153.28 \\
& 10000 & 25635.15 & 265.28 & 265.28 \\
\midrule
\multirow{4}{*}{500} & 10 & 22.67 & 22.69 & 22.67 \\
& 100 & 144.10 & 144.03 & 144.03 \\
& 1000 & 624.35 & 603.93 & 603.93 \\
& 10000 & 4927.24 & 1533.38 & 1533.38 \\
\midrule
\multirow{4}{*}{5000} & 10 & 32.74 & 32.76 & 32.74 \\
& 100 & 247.97 & 247.97 & 247.97 \\
& 1000 & 1452.51 & 1451.78 & 1451.78 \\
& 10000 & 6247.83 & 6043.16 & 6043.16 \\
\bottomrule
\end{tabular}
\end{center}
\end{table}
 
\section{QUOTIENT NML SCORE}

We will now introduce a quotient normalized maximum likelihood (qNML)
criterion for learning Bayesian network structures.  While equally
efficient to compute as BDeu and fNML, it is free from
hyperparameters, and it can be proven to give equal scores to
equivalent models. Furthermore, it coincides with the actual NML score
for exponentially many models. In our empirical tests it produces
models featuring good predictive performance with significantly
simpler structures than BDeu and fNML.

Like BDeu and fNML, qNML can be expressed as a product of $n$ terms,
one for each variable, but unlike the other two, it is not based on
further partitioning the corresponding data column
\begin{eqnarray}
\label{eqn:qnmldef}
s^{qNML}(D;G) & := & \sum_{i=1}^n s^{qNML}_i(D;G)\\
& := & \sum_{i=1}^n \log \frac{P^1_{NML}(D_{i,G_i};G)}
                             {P^1_{NML}(D_{G_i};G)}.\nonumber
\end{eqnarray}
The trick here is to model a subset of columns as though there were no
conditional independencies among the corresponding variables $S
\subset X$.  In this case, we can collapse the $\prod_{X_i\in S} r_i$
value configurations and consider them as values of a single variable
with $\prod_{X_i\in S} r_i$ different values which can then be modeled
with a one-dimensional $P^1_{NML}$ code.  The $s^{qNML}$ score does
not necessarily define a distribution for $D$, but it is easy to
verify that it coincides with the NML score for all networks
that are composed of fully connected components.  The number of such
networks is lower bounded by the number of nonempty partitions of a
set of $n$ elements, i.e., the $n^\text{th}$ Bell number.

We are now ready to prove some important properties of the qNML score.

\subsection {qNML Is Score Equivalent}

qNML yields equal scores for network structures that encode the same set
of independencies. Verma and Pearl~\cite{Verm90} showed that the
equivalent networks are exactly those which a) are the same when directed
arcs are substituted by undirected ones and b) which have the same
\textit{V-structures}, i.e. the variable triplets $(A,B,C)$ where both
$A$ and $B$ are parents of $C$, but there is no arc between $A$ and
$B$ (in either direction).  Later, Chickering~\cite{Chick95} showed
that all the equivalent network structures, and only those structures,
can be reached from each other by reversing, one by one, the so-called
\textit{covered arcs}, i.e. the arcs from node $A$ to $B$, for which
$B$'s parents other than $A$ are exactly  $A$'s parents
($G_B=\{A\}\cup G_A$).

We will next state this as a
theorem and sketch a proof for it. A more detailed proof appears in Appendix A in the Supplementary Material.
\begin{theorem}
  \label{thm:scoreqv}
  Let $G$ and $G'$ be two Bayesian network structures that differ only
  by a single covered arc reversal, i.e., the arc from $A$ to $B$ in $G$
  has been reversed in $G'$ to point from $B$ to $A$, then
  $$s^{qNML}(D;G)=s^{qNML}(D;G').$$
\end{theorem}
\begin{proof}
  Now the scores for structures can be decomposed as
  $s^{qNML}(D;G)=\sum_{i=1}^{n}s_i^{qNML}(D;G)$ and
  $s^{qNML}(D;G')=\sum_{i=1}^{n}s_i^{qNML}(D;G')$.  Since only the
  terms corresponding to the variables $A$ and $B$ in these sums are
  different, it is enough to show that the sum of these two terms are
  equal for $G$ and $G'$. Since we can assume the data to be fixed we
  lighten up the notation and write
  $P^1_{NML}(i,G_i) := P^1_{NML}(D_{i,G_i};G)$ and
  $P^1_{NML}(G_i)   := P^1_{NML}(D_{G_i};G)$. Now
  \begin{eqnarray}
    \lefteqn{s_A^{qNML}(D;G)+s_B^{qNML}(D;G)} \nonumber\\
    && =\log\frac{P^1_{NML}(A,G_{A})}{P^1_{NML}(G_{A})}
            \frac{P^1_{NML}(B,G_{B})}{P^1_{NML}(G_{B})}\nonumber\\
    && =\log 1\cdot\frac{P^1_{NML}(B,G_{B})}{P^1_{NML}(G_{A})}\nonumber\\
    && =\log \frac{P^1_{NML}(B,G'_{B})}{P^1_{NML}(G'_{A})}
             \frac{P^1_{NML}(A,G'_{A})}{P^1_{NML}(G'_{B})}\nonumber\\
 && =s_A^{qNML}(D;G')+s_B^{qNML}(D;G'),\nonumber
\end{eqnarray}
  using the equations $\{A\}\cup G_A = G_B$, $\{B\}\cup G'_B = G'_A$,
  $\{B\}\cup G_B = \{A\} \cup G'_A$, and $G_A = G'_B$ which follow
  easily from the definition of covered arcs.
\end{proof}

\subsection{qNML is Consistent}

One important property possessed by nearly every model selection
criterion is consistency. In our context, consistency means that given
a data matrix with $N$ samples coming from a distribution faithful to
some DAG $G$, the qNML will give the highest score to the true graph $G$
with a probability tending to one as $N$ increases. We will show this
by first proving that qNML is asymptotically equivalent to the widely used
BIC criterion which is known to be consistent \cite{Schw78, Haug88}.
The outline of this proof follows a similar pattern to that in
\cite{SilanderIJAR10} where the consistency of fNML was proved.

The BIC criterion can be written as
\begin{equation}\label{BIC}
\textnormal{BIC}(D;G) = \sum_{i = 1}^n \log P(D_i \ | \ \hat{\theta}_{i | G_i} ) - \frac{q_i(r_i - 1)}{2} \log N,
\end{equation}
where $\hat{\theta}_{i | G_i}$ denotes the maximum likelihood parameters of
the conditional distribution of variable $i$ given its parents in
$G$. 

Since both the BIC and qNML scores are decomposable, we can focus on
studying the local scores. We will next show that, asymptotically, the
local qNML score equals the local BIC score. This is formulated in the
following theorem:

\begin{theorem}\label{consistency}
Let $r_i$ and $q_i$ denote the number of possible values for variable
$X_i$ and its possible configurations of parents $G_i$,
respectively. As $N \to \infty$,
$$
s^{qNML}_i(D;G) =  \log P(D_i \ | \ \hat{\theta}_{i | G_i} )  - \frac{q_i(r_i - 1)}{2} \log N.
$$
\end{theorem}

In order to prove this, we start with the definition of qNML and write
\begin{align}\label{qnmlDef2}
s^{qNML}_i(D;G) &= \log \frac{P(D_{i, G_i} \ | \ \hat{\theta}_{i, G_i}
  )}{P(D_{G_i} \ | \ \hat{\theta}_{G_i} )} \notag \\ & -(reg(N,q_i
r_i) - reg(N,q_i)).
\end{align}

By comparing the equations (\ref{BIC}) and (\ref{qnmlDef2}), we see
that proving our clam boils down to showing two things: 1) the terms
involving the maximized likelihoods are equal and 2) the penalty terms
are asymptotically equivalent. We will formulate these as two
lemmas.

\begin{lemma}\label{MLLemma} The maximized likelihood terms in equations (\ref{BIC}) and (\ref{qnmlDef2}) are equal:    
$$
\frac{P(D_{i, G_i} \ | \ \hat{\theta}_{i, G_i} )}{P(D_{G_i} \ | \ \hat{\theta}_{G_i} )} = P(D_i \ | \ \hat{\theta}_{i | G_i} ).
$$
\end{lemma}

\begin{proof}
We can write the terms on the left side of the equation as
\begin{eqnarray*}
P(D_{i, G_i} \ | \ \hat{\theta}_{i, G_i}) &=& \prod_{j,k} \Le \frac{N_{ijk}}{N}  \Ri^{N_{ijk}}, \text{ and }\\
P(D_{G_i} \ | \ \hat{\theta}_{G_i} ) &=&  \prod_{j} \Le \frac{N_{ij}}{N}  \Ri^{N_{ij}}.
\end{eqnarray*}
Here, $N_{ijk}$ denotes the number of times we observe $X_i$ taking value $k$ when its parents are in $j^\text{th}$ configuration in our data matrix $D$. Also, $N_{ij} = \sum_k N_{ijk}$ (and $\sum_{k,j}N_{ijk} = N$ for all $i$).
Therefore,
\begin{align*}
\frac{P(D_{i, G_i} \ | \ \hat{\theta}_{i, G_i} )}{P(D_{G_i} \ | \ \hat{\theta}_{G_i} )} &= \frac{ \prod_{j,k} \Le \frac{N_{ijk}}{N}  \Ri^{N_{ijk}}}{\prod_{j} \Le \frac{N_{ij}}{N}  \Ri^{N_{ij}}} \\
&= \frac{ \prod_{j,k} \Le \frac{N_{ijk}}{N}  \Ri^{N_{ijk}}}{\prod_{j}\prod_{k} \Le \frac{N_{ij}}{N}  \Ri^{N_{ijk}}}\\
&= P(D_i \ | \ \hat{\theta}_{i | G_i} ).
\end{align*} 

\end{proof}
Next, we consider the difference of regrets in
(\ref{qnmlDef2}) which corresponds to the penalty term of BIC. The
following lemma states that these two are asymptotically equal:

\begin{lemma}\label{penaltyLemma}
As $N \to \infty$,
$$reg(N,q_i r_i) - reg(N,q_i) = \frac{q_i(r_i - 1)}{2}\log N + O(1).$$   
\end{lemma}
\begin{proof}
The regret
for a single multinomial variable with $m$ categories can be written
asymptotically as
\begin{equation}\label{regretAsymp}
reg(N,m) = \frac{m-1}{2}\log N + O(1).
\end{equation}
For the more precise statement with the underlying assumptions (which are fulfilled in the multinomial case) and for the proof, we refer to \cite{Riss96a, Grun07}. Using this, we have
\begin{align*}
reg(N,q_i r_i)& -  reg(N,q_i) \\ 
&= \frac{q_ir_i-1}{2}\log N-\frac{q_i-1}{2}\log N + O(1) \\
&= \frac{q_ir_i-1-q_i + 1}{2}\log N + O(1) \\
&= \frac{q_i(r_i - 1)}{2} \log N + O(1).
\end{align*}
\end{proof}
This concludes our proof since Lemmas \ref{MLLemma} and
\ref{penaltyLemma} imply Theorem \ref{consistency}.

\subsection{qNML Equals NML for Many Models}
The fNML criterion can be seen as a computationally feasible
approximation of the more desirable NML criterion.  However, the fNML
criterion equals the NML criterion only for the Bayesian network
structure with no arcs.  It can be shown that the qNML criterion
equals the NML criterion for all the networks $G$ whose connected
components are tournaments (i.e., complete directed acyclic subgraphs of
$G$). These networks include the empty network, the fully connected
one and many networks in between having different complexity. While
the generating network is unlikely to be composed of tournament
components, the result increases the plausibility that qNML is a
reasonable approximation for NML in general\footnote{A claim that is
  naturally subject for further study.}.

\begin{theorem}
If $G$ consists of $C$ connected components $(G^1,\ldots,G^C)$ with
variable sets $(V^1,\ldots,V^C)$, then $\log P_{NML}(D;G) = s^{qNML}(D;G)$
for all data sets $D$.
\end{theorem}
\begin{proof}
The full proof can be found in Appendix C in the Supplementary
Material.  The proof first shows that NML decomposes for these
particular structures, so it is enough to show the equivalence for
fully connected graphs.
It further derives the number $a(n)$ of
different $n$-node networks whose connected components are
tournaments, which turns out to be the formula for OEIS sequence
A000262\footnote{https://oeis.org/A000262}.
In general this sequence grows rapidly; $1, 1, 3, 13, 73, 501, 4051,
37633, 394353, 4596553, \ldots$.
\end{proof}

\subsection{qNML is Regular}\label{sec:regularity}

Suzuki \cite{Suzuki2017} defines regularity for a scoring function $Q_n(X \mid Y)$ as follows:
\begin{definition}
Assume $H_N(X \mid Y') \leq H_N(X \mid Y)$, where $Y' \subset Y.$ We say that $Q_N(\cdot \mid \cdot)$ is regular if $Q_N(X \mid Y') \geq Q_N(X \mid Y)$.
\end{definition}
In the definition, $N$ denotes the sample size, $X$ is some random variable, $Y$ denotes the proposed parent set for $X$, and $H_N(\cdot \mid \cdot)$ refers to the empirical conditional entropy. Suzuki \cite{Suzuki2017} shows that BDeu violates this principle and demonstrates that this can cause the score to prefer more complex networks even though the data do not support this. Regular scores are also argued to be computationally more efficient when applied with branch-and-bound type algorithms for Bayesian network structure learning \cite{Suzuki2017_2}. 

By analyzing the penalty term of the qNML scoring function, one can prove the following statement:
\begin{theorem}
qNML score is regular.
\end{theorem}
\begin{proof}
The proof is given in Appendix B in the Supplementary Material.
\end{proof}
As fNML criterion differs from qNML only by how the penalty term is defined, we obtain the following result with little extra work:
\begin{theorem}
fNML score is regular.
\end{theorem}
\begin{proof}
The proof is given in Appendix B in the Supplementary Material.
\end{proof}

Suzuki \cite{Suzuki2017} independently introduces a Bayesian Dirichlet
quotient (BDq) score that can also be shown to be score equivalent
and regular.  However, like BDeu, this score features a single
hyperparameter $\alpha$, and our initial studies suggest that BDq is
also very sensitive to this hyperparameter (see Appendix D in the
Supplementary Material), the issue that was one of the main
motivations to develop a parameter-free model selection criterion like
qNML.

\section{EXPERIMENTAL RESULTS}

We empirically compare the capacity of qNML to that of BIC, BDeu ($\alpha = 1$) and
fNML in identifying the data generating structures, and producing
models that are predictive and parsimonious.  It seems that none of
the criteria uniformly outperform the others in all these desirable
aspects of model selection criteria.

\subsection{Finding Generating Structure}

In our first experiment, we took five widely used benchmark Bayesian networks\footnote{Bayesian Network Repository: \url{http://www.bnlearn.com/bnrepository/}}, sampled data from them, and tried to learn the generating structure with the different scoring functions using various sample sizes. We used the following networks: Asia ($n=5$, 8 arcs), Cancer ($n=5$, 4 arcs), Earthquake ($n=5$, 4 arcs), Sachs ($n=11$, 17 arcs) and Survey ($n=6$, 6 arcs). These networks were picked in order to use the dynamic programming based exact structure learning~\cite{cosco.uai06} which limited the number $n$ of variables
to less than 20. We measured the quality of the learned structures using structural Hamming distance (SHD) \cite{Tsamardinos2006}.

Figure \ref{fig:all_shd} shows SHDs for all the scoring criteria for each network. Sample sizes range from $10$ to $10000$ and the shown results are averages computed from $1000$ repetitions. None of the scores dominates in all settings considered. BIC fares well when the sample size is small as it tends to produce a nearly empty graph which is a good answer in terms of SHD when the generating networks are relatively sparse. qNML obtains strong results in the Earthquake and Asia networks, being the best or the second best with all the sample sizes considered.

\begin{figure}[h]
\centering
\includegraphics[width=\columnwidth]{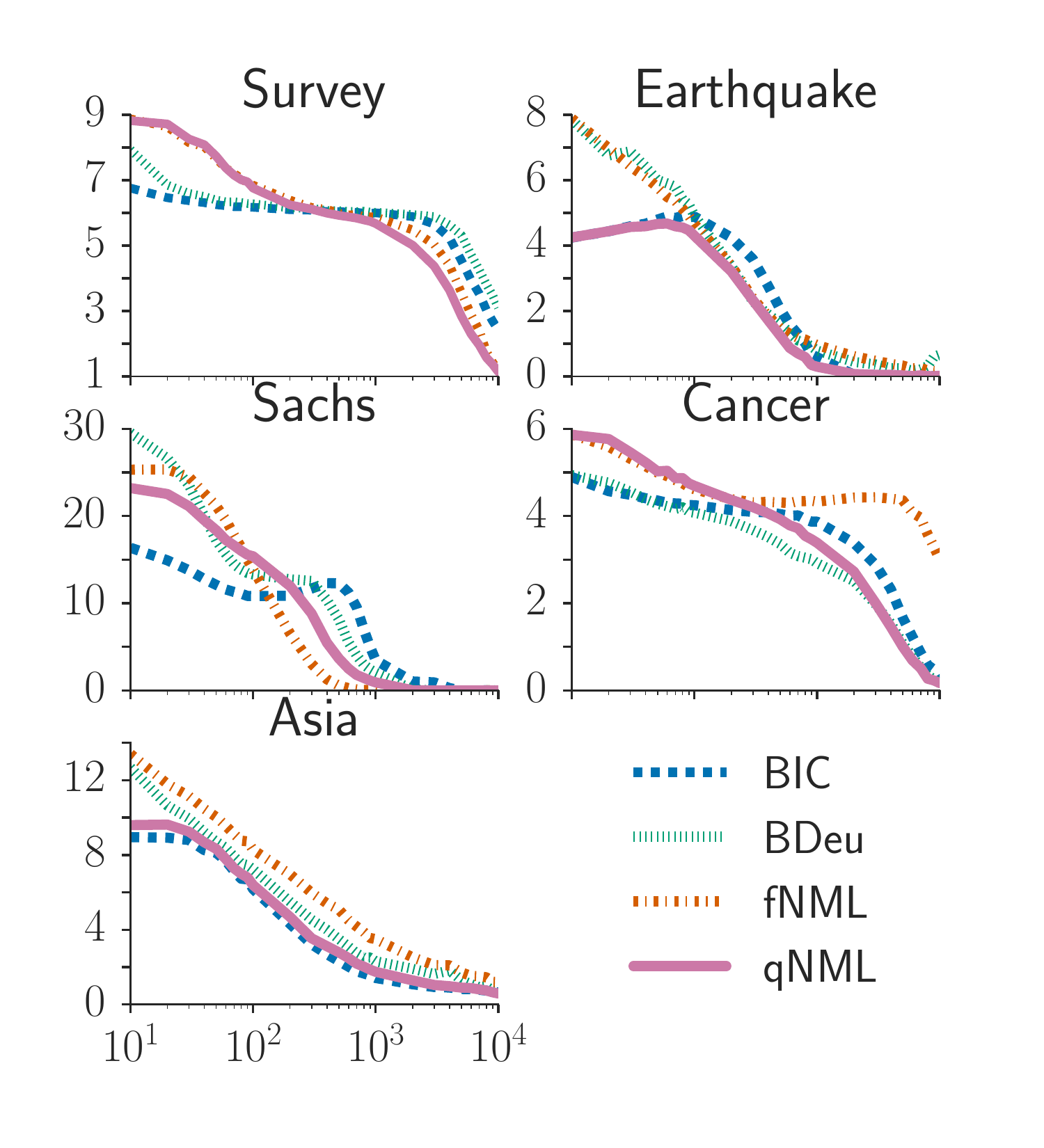}
\caption{Sample size versus SHD with data generated from real world DAGs.}
\label{fig:all_shd}
\end{figure}

Figure \ref{fig:shd_ranks} summarizes the SHD results in all networks by showing the average rank for each score. The ranking was done by giving the score with the lowest SHD rank $1$ and the worst one rank $4$. In case of ties, the methods with the same SHD were given the same rank. The shown results are averages computed from $5000$ values ($5$ networks, $1000$ repetitions). From this, we can see that qNML never has the worst average ranking, and it has the best ranking with sample sizes greater than $300$. This suggests that qNML is overall a safe choice in structure learning, especially with moderate and large sample sizes.


\begin{figure}[h]
\centering
\includegraphics[width=8cm,height=5cm]{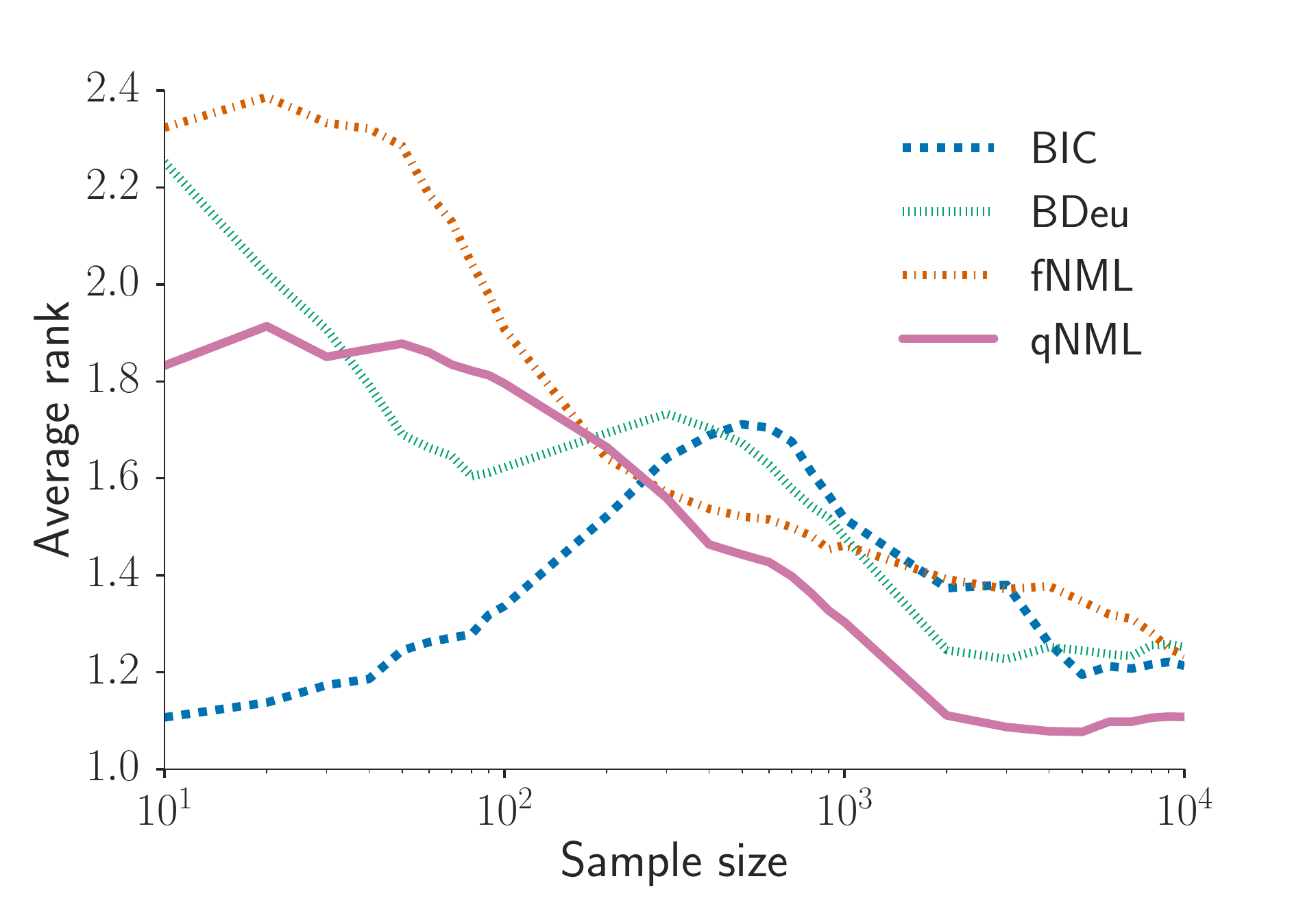}
\caption{Average ranks for the scoring functions in structure learning experiments.}
\label{fig:shd_ranks}
\end{figure}

%

\subsection{Prediction and Parsimony}

To empirically compare the model selection criteria, we took 20 UCI
data sets~\cite{Lichman:2013} and ran train and test experiments for
all of them. To better compare the performance over different sample
sizes, we picked different fractions ($10\%, 20\%, \ldots, 90\%$) of
the data sets as training data and the rest as the test data.
This was done for 1000 different permutations of each data set.
The training was conducted using the dynamic
programming based exact structure learning algorithm.

When predicting $P(d_{test}|D_{train},S,\theta)$
with structures $S$ learned by the BDeu score, we used the
Bayesian predictive parameter values (BPP) $\theta_{ijk} \propto
N_{ijk}+\frac{1}{r_iq_i}$.  In the spirit of keeping the scores
hyperparameter-free, for structures learned by the other model
selection criteria, we used the sequential predictive NML (sNML)
parametrization $\theta_{ijk}\propto e(N_{ijk})(N_{ijk}+1)$, where
$e(n)=(\frac{n+1}{n})^n$ as suggested in~\cite{Riss07b}.

\begin{table}
\caption{Average predictive performance rank over different sample sizes for different model selection criteria in 20 different data sets.}
\label{tbl:preds}
\begin{center}
\begin{tabular}{crrrrr}
       Data
    & N
    & \multicolumn{1}{p{0.7cm}}{\centering BDeu \\ BPP}
    & \multicolumn{1}{p{0.8cm}}{\centering BIC \\ sNML}
    & \multicolumn{1}{p{0.9cm}}{\centering fNML \\ sNML}
    & \multicolumn{1}{p{0.9cm}}{\centering qNML \\ sNML}\\
       \midrule
 PostOpe &     90 &              2.79 &     \textbf{1.20} &  \underline{3.06} &           2.94 \\
    Iris &    150 &  \underline{2.82} &              2.37 &     \textbf{2.27} &           2.54 \\
    Wine &    178 &  \underline{3.23} &     \textbf{1.88} &              2.67 &           2.22 \\
   Glass &    214 &  \underline{3.61} &              3.09 &     \textbf{1.42} &           1.88 \\
 Thyroid &    215 &              2.55 &  \underline{3.21} &     \textbf{1.80} &           2.44 \\
 HeartSt &    270 &  \underline{3.12} &     \textbf{1.39} &              3.12 &           2.37 \\
 BreastC &    286 &  \underline{3.09} &     \textbf{1.41} &              2.97 &           2.53 \\
 HeartHu &    294 &  \underline{3.18} &     \textbf{1.66} &              2.90 &           2.27 \\
 HeartCl &    303 &  \underline{3.46} &     \textbf{1.38} &              2.99 &           2.17 \\
   Ecoli &    336 &              3.20 &  \underline{3.53} &     \textbf{1.24} &           2.04 \\
   Liver &    345 &  \underline{3.17} &              2.39 &              2.69 &  \textbf{1.75} \\
 Balance &    625 &  \underline{3.35} &              1.91 &     \textbf{1.59} &           3.16 \\
 BcWisco &    699 &  \underline{3.06} &              2.03 &              2.89 &  \textbf{2.02} \\
 Diabete &    768 &  \underline{2.91} &              2.70 &              2.68 &  \textbf{1.71} \\
 TicTacT &    958 &  \underline{3.44} &              2.71 &     \textbf{1.31} &           2.53 \\
   Yeast &   1484 &              2.60 &  \underline{3.76} &     \textbf{1.55} &           2.10 \\
 Abalone &   4177 &              2.60 &  \underline{3.64} &     \textbf{1.04} &           2.72 \\
 PageBlo &   5473 &              2.24 &  \underline{3.61} &     \textbf{1.31} &           2.83 \\
   Adult &  32561 &              3.23 &  \underline{3.77} &     \textbf{1.00} &           2.00 \\
 Shuttle &  58000 &     \textbf{1.44} &  \underline{3.78} &              1.56 &           3.22 \\
\end{tabular}
\end{center}
\end{table}

For each train/test sample, we ranked the predictive performance of
the models learned by the four different scores (rank
1 being the best and 4 the worst). Table~\ref{tbl:preds} features the
average rank for different data sets, the average being taken over
1000 different train/test samples for each 9 sample sizes.
BIC's bias for simplicity makes it often win (written bold in
the table) with small sample sizes, but it performs worst
(underlined) for the larger sample sizes (for the same reason), while
fNML seems to be good for large sample sizes. The striking feature
about the qNML is its robustness.  It is usually between BIC and fNML
for all the sample sizes making it a ``safe choice''. This can be
quantified if we further average the columns of
Table~\ref{tbl:preds}, yielding the average ranks of $2.95, 2.57,
2.10$, and $2.37$, with standard deviations $0.49, 0.90, 0.76$, and
$0.43$.  While fNML achieves on average the best rank, the runner-up
qNML has the lowest standard deviation.

Figure~\ref{fig:bcnpmean} shows how fNML still sometimes behaves
strangely in terms of model complexity as measured by the number of
parameters in the model. qNML, instead, appears to yield more
parsimonious models. To study the concern of fNML producing too
complex models for small sample sizes, we studied the number of
parameters in models produced by different scores when using
10\% of each data set for structure learning.

Looking at the number of parameters for the same 20 data sets again
features BIC's preference for simple models
(Table~\ref{tbl:nofparams}).  qNML usually (19/20) yields more
parsimonious models than fNML that selects the most complex model for
7 out of 20 data sets.

The graphs for different sample sizes for both predictive accuracy and
the number of parameters can be found in Appendix E in the Supplementary Material.

\begin{table}
  \caption{Average number of parameters in models 
    for different scores in 20 different data sets.}
\label{tbl:nofparams}
\begin{center}
\begin{tabular}{crrrrr}
       Data
    & N
    & \multicolumn{1}{p{0.7cm}}{\centering BDeu\\ }
    & \multicolumn{1}{p{0.8cm}}{\centering BIC\\ }
    & \multicolumn{1}{p{0.9cm}}{\centering fNML\\}
    & \multicolumn{1}{p{0.9cm}}{\centering qNML\\}\\
\midrule
    Iris &    15 &     \underline{37} &   \textbf{23} &                33 &               29 \\
 PostOpe &    18 &   \underline{1217} &   \textbf{19} &               397 &              146 \\
   Ecoli &    34 &    \underline{182} &   \textbf{31} &               162 &               77 \\
   Liver &    35 &                 45 &   \textbf{15} &    \underline{61} &               24 \\
    Wine &    36 &  \underline{16521} &   \textbf{70} &               807 &              205 \\
   Glass &    44 &   \underline{1677} &   \textbf{48} &               506 &               97 \\
 Thyroid &    44 &                 40 &   \textbf{23} &    \underline{66} &               28 \\
 HeartSt &    54 &  \underline{16861} &   \textbf{44} &              1110 &              256 \\
 BreastC &    58 &  \underline{25797} &   \textbf{49} &              3767 &              844 \\
 HeartHu &    60 &   \underline{1634} &   \textbf{43} &               792 &               90 \\
 HeartCl &    62 &  \underline{34381} &   \textbf{47} &              1433 &              404 \\
 BcWisco &    70 &   \underline{4630} &   \textbf{42} &               603 &               89 \\
 Diabete &    77 &                 39 &   \textbf{22} &   \underline{216} &               34 \\
 TicTacT &    96 &  \underline{13701} &   \textbf{25} &              1969 &              767 \\
 Balance &   126 &        \textbf{20} &            24 &                49 &  \underline{611} \\
   Yeast &   149 &                 71 &   \textbf{31} &   \underline{265} &               75 \\
 Abalone &   418 &                 91 &   \textbf{46} &   \underline{150} &               63 \\
 PageBlo &   548 &    \underline{703} &   \textbf{45} &               380 &               56 \\
 Shuttle &  5800 &                535 &   \textbf{99} &   \underline{717} &              130 \\
   Adult &  6513 &                699 &  \textbf{479} &  \underline{1555} &              945 \\
\end{tabular}
\end{center}
\end{table}

\section{CONCLUSION}

We have presented qNML, a new model selection criterion for learning
structures of Bayesian networks.  While being competitive in
predictive terms, it often yields significantly simpler models than
other common model selection criteria other than BIC that has a very strong
bias for simplicity. The computational cost of qNML
equals the cost of the current state-of-the-art criteria. The
criterion also gives equal scores for models that encode the same
independence hypotheses about the joint probability distribution.
qNML also coincides with the NML criterion for many models.
In our experiments, the qNML criterion appears as a safe choice for a
model selection criterion that balances
parsimony, predictive capability and the ability to quickly converge to the
generating model.

\subsubsection*{Acknowledgements}

J.L., E.J. and T.R. were supported by the Academy of Finland (COIN CoE and Project {\sc TensorML}). J.L. was supported by the DoCS doctoral programme of the University of Helsinki.


\bibliographystyle{apalike}
\bibliography{cosco}

\onecolumn

\begin{appendices}
\section{Score equivalency proof}

Chickering showed that any equivalent structure can be reached from
another by a series of arc reversal operations without leaving the
equivalence class~\cite{Chick95}. This means that in order to show
that equivalent structures leads to the same qNML score, we only need
to prove that this is the case for equivalent structures that differ
by a single arc reversal. 

The network without any arcs is the sole member of its equivalence
class. All the other networks have at least one arc. We will first
present some lemmas that characterize the relations of the parent sets
before and after the arc reversal. Let $G$ be a network structure
and $G'$ an equivalent network structure after the arc from $A$
to $B$ has been reversed to point from $B$ to $A$. We
will denote the parent sets of $A$ and $B$ in $G$ by $G_{A}$
and $G_{B}$, and the parent sets of $A$ and $B$ in $G'$ by $G'_{A}$
and $G'_{B}$. Note that not all the arc reversals lead to the equivalent
structures. By saying that $G$ and $G'$ are equivalent we imply
that our arc reversal does not destroy existing V-structures or create
new ones. The crucial observation is that if reversing the arc that
goes from $A$ to $B$ does not create or destroy V-structures, it
must be that the parents of $B$ other than $A$ are exactly the same
as parents of $A$ (statement 3 below). 

\begin{lemma}\label{thm:sameparents}
\item $G_{B}=G_{A}\cup\{A\}$
\end{lemma}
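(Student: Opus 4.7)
The plan is to prove the set equality by two inclusions, using throughout that Markov equivalence of $G$ and $G'$ forces identical skeletons and identical V-structures (Verma--Pearl), together with the fact that $G'$ must remain a DAG.

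First, I observe that $A\in G_B$ by hypothesis, so it suffices to establish $G_B\setminus\{A\}=G_A$. For the inclusion $G_B\setminus\{A\}\subseteq G_A$, I would take any $X\in G_B\setminus\{A\}$, so $X\to B$ in $G$. If $X$ and $A$ were non-adjacent, the triple $X\to B\leftarrow A$ would be a V-structure in $G$; but in $G'$ the edge $A\to B$ has been flipped to $B\to A$, so this V-structure is destroyed, contradicting equivalence. Hence $X$ is adjacent to $A$, and since the skeleton is preserved, this adjacency survives in $G'$ in the same orientation. The orientation $A\to X$ is impossible, because then $G'$ would contain the cycle $B\to A\to X\to B$. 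Therefore $X\to A$ in $G$, i.e.\ $X\in G_A$.

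For the reverse inclusion $G_A\subseteq G_B\setminus\{A\}$, I would take $Y\in G_A$. In $G'$ we still have $Y\to A$, and now $B\to A$ as well, so $Y$ and $B$ are both parents of $A$ in $G'$. If they were non-adjacent, then $Y\to A\leftarrow B$ would be a V-structure in $G'$; but in $G$ we had $Y\to A\to B$, which is not a V-structure, so equivalence is again violated. Thus $Y$ and $B$ are adjacent in $G$. The orientation $B\to Y$ is ruled out by the cycle $B\to Y\to A\to B$ in $G$, so $Y\to B$, giving $Y\in G_B\setminus\{A\}$.

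Combining the two inclusions yields $G_B=G_A\cup\{A\}$. The main thing to get right is the bookkeeping of which direction a given edge must point: in each case only one orientation survives the combined constraints of (i) preservation of V-structures and (ii) acyclicity, and the argument is essentially a case analysis eliminating the wrong orientation by exhibiting either a forbidden V-structure change or a directed 3-cycle. No calculation is involved beyond this case analysis.
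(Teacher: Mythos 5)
Your proof is correct and follows essentially the same route as the paper's: both inclusions are established by showing that a missing adjacency would create or destroy a V-structure at $A$ or $B$, and the wrong orientation of the forced adjacency is then excluded by exhibiting a directed cycle in $G$ or $G'$. The only difference is cosmetic bookkeeping (you split off $A$ explicitly and phrase the cycle arguments in whichever of $G$, $G'$ is convenient), so nothing further is needed.
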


\begin{proof}
We show the direction $G_{A}\cup\{A\}\subset G_{B}$ 
by contradiction. First of all, a parent of $A$ cannot be a child
of $B$ or otherwise there would be a loop in $G.$ If $A$ had a
parent $Z$ that is not a parent of $B$, then $B$ and $Z$ were
not adjacent and the reversal of the arc would create a new V-structure
$(B,A,Z)$. Since this was forbidden by the equivalence of $G$ and
$G'$, $Z$ must also be a parent of $B$. 

Also, $B$ cannot have parents other than $A$ that are not also parents
of $A$, i.e., $G_{B}\subset G_{A}\cup{A}$ . If there were such
a parent $Z$ it should be anyway adjacent to $A$, otherwise the
V-structure $(A,B,Z)$ would be destroyed in the reversal. Since the
$Z$ was not a parent of $A$, the only possibility for adjacency
is that $Z$ were a child of $A$. However, in this case arc reversal
would lead to a loop $B,A,Z,B$.
\end{proof}

We will next list some simple consequences of the equivalence preserving
arc reversal:

\begin{lemma}
\label{thm:reveqs}
\hangindent\leftmargini
\hspace{1mm}
\begin{enumerate}
\item $G'_{B}=G_{B}\setminus\{A\}$
\item $G'_{A}=G_{A}\cup\{B\}$ 
\item $G'_{B}=G_{A}$
\item $\{B\}\cup G_{B}=\{B\}\cup G_{A}\cup\{A\}=G'_{A}\cup\{A\}$
\item $G'_{A}=G_{A}\cup\{B\}=G'_{B}\cup\{B\}$
\end{enumerate}
\end{lemma}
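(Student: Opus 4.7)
The plan is to establish items (1) and (2) directly from the definition of a single arc reversal, and then derive items (3)--(5) as purely set-theoretic consequences of (1), (2) and Lemma~\ref{thm:sameparents}.

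First I would argue (1) and (2). Since the operation that takes $G$ to $G'$ only reverses the single arc $A \to B$, no other edge in the graph changes, so the parent set of any node other than $A$ and $B$ is unaffected, and the parent sets of $A$ and $B$ change only by swapping the endpoint $A \leftrightarrow B$ in that one arc. This immediately gives $G'_B = G_B \setminus \{A\}$, since $A$ is no longer a parent of $B$ while every other parent of $B$ is preserved, and $G'_A = G_A \cup \{B\}$, since $B$ becomes a parent of $A$ while the old parents of $A$ are preserved (they cannot have become children, because the only reversal performed was the one on the arc $A\to B$). One small point worth checking here is that $B \notin G_A$ to begin with, which is clear because $G$ is acyclic and contains $A\to B$.

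The remaining three items are then routine. For (3), combining (1) with Lemma~\ref{thm:sameparents} gives
\[
G'_B \;=\; G_B \setminus \{A\} \;=\; (G_A \cup \{A\}) \setminus \{A\} \;=\; G_A,
\]
where the last equality uses $A \notin G_A$ (again by acyclicity of $G$). For (4), applying Lemma~\ref{thm:sameparents} to rewrite $G_B$ and then applying (2) yields
\[
\{B\} \cup G_B \;=\; \{B\} \cup G_A \cup \{A\} \;=\; (G_A \cup \{B\}) \cup \{A\} \;=\; G'_A \cup \{A\}.
\]
Finally, (5) combines (2) and (3) to give $G'_A = G_A \cup \{B\} = G'_B \cup \{B\}$.

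I do not anticipate a real obstacle: the only content that is not bookkeeping lies in justifying (1) and (2), and for these the key is simply to observe that Chickering's single arc reversal alters no parent set other than by moving $A$ and $B$ between each other's parent sets. Once that observation is pinned down, the rest of the lemma is a short algebraic unfolding using Lemma~\ref{thm:sameparents}.
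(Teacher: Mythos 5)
Your proposal is correct and follows essentially the same route as the paper: items (1) and (2) are read off from the definition of a single arc reversal, item (3) comes from combining (1) with Lemma~\ref{thm:sameparents}, and items (4) and (5) are set-theoretic consequences of the earlier identities. The extra care you take in noting $A \notin G_A$ and $B \notin G_A$ by acyclicity is a harmless refinement of what the paper leaves implicit.
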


\begin{proof}

The statements 1 and 2 are trivial since they just state the fact
that an arc reversal removes $A$ from the parents of $B$ and adds
$B$ to the parents of $A$. 

The
statement 3 follows directly by using the lemma \ref{thm:sameparents}
to the statement 1. Combining these equalities we can generate more of them 
such as statements 4 and 5. 
\end{proof}

Let us take structures $G$ and $G'$. Since we can assume the data to be fixed we lighten
up the notations and write 
$P^1_{NML}(i,G_i) := P^1_{NML}(D[\cdot,(i,G_i)];G)$ and
$P^1_{NML}(G_i) := P^1_{NML}(D[\cdot,G_i];G)$.

\begin{theorem}
$  s^{qNML}(D;G)=s^{qNML}(D;G').$
\end{theorem}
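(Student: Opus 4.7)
The plan is to exploit the fact that the qNML score decomposes additively across nodes, each node contributing a quantity determined solely by its own parent set in the structure. Since every node $i \notin \{A,B\}$ has the same parent set in $G$ and in $G'$, its contribution is identical in both scores and cancels from the difference. The theorem therefore reduces to showing that the combined contribution from $A$ and $B$ agrees between $G$ and $G'$.

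To carry this out, I would first use Lemma~\ref{thm:sameparents} and Lemma~\ref{thm:reveqs} to rewrite each of the four relevant parent sets and each of the four ``extended'' column sets (parent set plus child) in a uniform form using only $G_A$, $A$, and $B$. The crucial identifications are $G_B = \{A\}\cup G_A$, $G'_B = G_A$, $G'_A = \{B\}\cup G_A$, and in particular $\{A,B\}\cup G_A = \{B\}\cup G_B = \{A\}\cup G'_A$; these are exactly what the earlier lemmas supply.

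Next I would sum the two log-ratio terms for $A$ and $B$ in each graph and look for telescoping. In $G$, the quantity $P^1_{NML}(A,G_A)$ appearing in the node-$A$ contribution refers to NML on the columns $\{A\}\cup G_A = G_B$, which is precisely the column set for $P^1_{NML}(G_B)$ appearing in the denominator of the node-$B$ contribution; these two terms therefore cancel, leaving the single log-ratio $\log P^1_{NML}(\{A,B\}\cup G_A) - \log P^1_{NML}(G_A)$. An analogous cancellation for $G'$, using $\{B\}\cup G'_B = G'_A$, reduces the node-$A$ and node-$B$ contributions in $G'$ to the same expression, and the scores must coincide.

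The main obstacle is not computational but one of careful bookkeeping: one has to observe that a quantity such as $P^1_{NML}(A,G_A)$, despite being indexed in a way that labels $A$ as the ``child,'' is in fact just the NML for the joint data on the columns $\{A\}\cup G_A$ and hence depends only on that column set, not on the node's role in $G$. Once that observation is in place and the set equalities from Lemmas~\ref{thm:sameparents} and~\ref{thm:reveqs} are substituted, the telescoping is routine and yields $s^{qNML}(D;G)=s^{qNML}(D;G')$.
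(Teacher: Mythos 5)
Your proposal is correct and follows essentially the same route as the paper's proof: decompose the score, cancel the terms for nodes other than $A$ and $B$, and use the parent-set identities of Lemmas~\ref{thm:sameparents} and~\ref{thm:reveqs} to telescope the two log-ratios in each graph down to the common quantity $\log P^1_{NML}(\{A,B\}\cup G_A) - \log P^1_{NML}(G_A)$. Your explicit remark that $P^1_{NML}(A,G_A)$ depends only on the column set $\{A\}\cup G_A$ is exactly the (implicit) observation the paper relies on.
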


\begin{proof}

  Now the scores for structures can be decomposed as the
  $s^{qNML}(D;G)=\sum_{i=1}^{n}s_i^{qNML}(D;G)$ and 
  $s^{qNML}(D;G')=\sum_{i=1}^{n}s_i^{qNML}(D;G')$.

Since only the terms corresponding to the variables $A$ and $B$
in these sums are different, it is enough to show that

$$
s_A^{qNML}(D;G)+s_B^{qNML}(D;G) = s_A^{qNML}(D;G')+s_B^{qNML}(D;G')
$$
Now 

\begin{align*}
s_A^{qNML}(D;G)+s_B^{qNML}(D;G)& =\log\frac{P^1_{NML}(A,G_{A})}{P^1_{NML}(G_{A})}\frac{P^1_{NML}(B,G_{B})}{P^1_{NML}(G_{B})}\\
 & =\log 1\cdot\frac{P^1_{NML}(B,G_{B})}{P^1_{NML}(G_{A})}\\
 & =\log \frac{P^1_{NML}(B,G'_{B})}{P^1_{NML}(G'_{A})}\frac{P^1_{NML}(A,G'_{A})}{P^1_{NML}(G'_{B})}\\
 & =s_A^{qNML}(D;G')+s_B^{qNML}(D;G').
\end{align*}
The second equation follows from the lemma \ref{thm:sameparents}, and the third from
the statements 5 and 4 of the lemma \ref{thm:reveqs}.
\end{proof}
\newpage
\section{Regularity proof}

\subsection{Preliminaries}
We start by recalling the definition of regularity \citep{Suzuki2017}:

\begin{definition}
Assume $H_N(X \mid Y') \leq H_N(X \mid Y)$, where $Y' \subset Y.$ We say that the scoring function $Q_N(\cdot \mid \cdot)$ is regular if $Q_N(X \mid Y') \geq Q_N(X \mid Y)$.
\end{definition}
In the definition, $N$ denotes the sample size, $X$ is some random variable, $Y$ denotes the proposed parent set for $X$, and $H_N(X \mid Y)$ refers to the empirical conditional entropy based on $N$ samples of variables $X$ and $Y$.

Let $X$ be a categorical random variable with $r$ possible values. Let $U$ denote a possible parent set with $q$ different combinations of values for the variables, and $V$ a set with $m$ different configurations. Assume that we have observed $N$ samples of $(X,U,V)$ (denoted by $x_N,u_N$ and $v_N$) and $H_N(X \mid U) \leq H_N(X \mid U \cup V)$ holds.

Recall the definition of the qNML score:
\begin{align*}
Q^{qnml}_N(X\mid U) &= \log P(x_N \mid \hat{\theta}_{X\mid U} \ ) - \Le reg(N,rq) - reg(N,q) \Ri \\
&= \log P(x_N \mid \hat{\theta}_{X\mid U} \ ) - \log \frac{C(N,rq)}{C(N,q)},
\end{align*}where $C(N,r)$ is the normalizing constant the of the NML distribution for a categorical variable with $r$ possible values and sample size $N$ and $ \hat{\theta}_{X\mid U}$ denotes the maximum likelihood parameters of the conditional distribution of $X$ given $U$ which are computed from the data $(x_N,u_N)$.

In order to prove the regularity, we need the following three lemmas:
\begin{lemma}\label{Cpolynom} We can write $C(N,k)$ as a polynomial of $k$, formally
$$
C(N,k) = \sum_{j=1}^N a_j k^j,
$$where $a_j > 0$. 
\end{lemma}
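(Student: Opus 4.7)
The plan is to give $C(N, k)$ a combinatorial interpretation from which both polynomiality in $k$ and positivity of the coefficients follow simultaneously. Writing $[m] := \{1, \ldots, m\}$ and clearing the $N^N$ denominator from the definition,
$$
N^N \cdot C(N, k) \;=\; \sum_{n_1 + \cdots + n_k = N} \binom{N}{n_1, \ldots, n_k} \prod_{i=1}^k n_i^{n_i},
$$
I would read each summand as counting pairs $(f, g)$ in which $f : [N] \to [k]$ is a colouring with class sizes $(n_1, \ldots, n_k)$ and $g : [N] \to [N]$ is a self-map preserving the colouring, i.e.\ $f \circ g = f$. The multinomial coefficient counts colourings with the prescribed class sizes, and within each monochromatic class of size $n_i$ there are exactly $n_i^{n_i}$ admissible restrictions of $g$.

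Next I would enumerate the same pairs by conditioning on $g$ first. The constraint $f = f \circ g$ is equivalent to $f$ being constant on every weakly connected component of the functional digraph of $g$; hence there are exactly $k^{c(g)}$ admissible colourings for a given $g$, where $c(g)$ denotes the number of such components. Summing over $g$,
$$
N^N \cdot C(N, k) \;=\; \sum_{g : [N] \to [N]} k^{c(g)} \;=\; \sum_{j=1}^{N} N_j\, k^j,
$$
where $N_j$ counts the self-maps whose functional digraph has exactly $j$ weakly connected components. Because $1 \leq c(g) \leq N$ for every $g$, this already exhibits $C(N,k)$ as a polynomial in $k$ of degree $N$ with non-negative coefficients $a_j = N_j / N^N$.

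For strict positivity I would exhibit, for each $j \in \{1, \ldots, N\}$, a self-map with exactly $j$ components: take $j-1$ fixed points together with a cyclic permutation of the remaining $N - j + 1$ elements, yielding one component per fixed point plus one component for the cycle. The main obstacle, as I see it, is spotting the initial reformulation: recognising $\prod_i n_i^{n_i}$ as the number of colour-preserving self-maps of a set whose colour classes have sizes $n_i$ is the crucial move. Once that identification is in place the double count is immediate, and the positivity witnesses are elementary.
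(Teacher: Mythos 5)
Your proof is correct, but it takes a genuinely different route from the paper. The paper starts from the closed-form representation $C(N,k)=\sum_{l=0}^{N-1}(N-1)^{\underline{l}}k^{\overline{l+1}}/(N^{l+1}l!)$ cited from earlier work and then expands each rising factorial via unsigned Stirling numbers of the first kind, reading off positivity of the $a_j$ from the positivity of $|s(l+1,j)|$. You instead work directly from the definition $C(N,k)=\sum_{x_N}P(x_N\mid\hat\theta(x_N))$, group by count vectors to get $N^N C(N,k)=\sum_{n_1+\cdots+n_k=N}\binom{N}{n_1,\ldots,n_k}\prod_i n_i^{n_i}$, and double-count pairs $(f,g)$ with $f\circ g=f$ to obtain $N^N C(N,k)=\sum_{g:[N]\to[N]}k^{c(g)}$; I checked the key steps (colour-preservation forces constancy on weakly connected components, $0^0=1$ handles empty classes, and your witnesses with $j-1$ fixed points plus one cycle give $N_j\geq 1$ for every $j\in\{1,\ldots,N\}$) and the identity agrees with the paper's formula for small $N$. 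What your argument buys is self-containedness and an explicit combinatorial meaning for the coefficients, $a_j=N_j/N^N$ with $N_j$ the number of functional digraphs on $[N]$ having $j$ components, which makes the positivity and the degree bound transparent; what the paper's route buys is brevity given the cited representation, and an explicit formula for $a_j$ in terms of Stirling numbers that slots directly into the later manipulations of Lemma 3. Either proof suffices for the downstream use, since Lemma 3 only needs existence of a positive-coefficient polynomial representation.
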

\begin{lemma}\label{MLterms}
Assume $H_N(X \mid Y') \leq H_N(X \mid Y)$, where $Y' \subset Y$. Now $\log P(x_N \mid \hat{\theta}_{X\mid Y} \ ) = \log P(x_N \mid \hat{\theta}_{X\mid Y'} \ )$.
\end{lemma}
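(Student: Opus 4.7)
The plan is to show that the hypothesis $H_N(X\mid Y') \leq H_N(X\mid Y)$, combined with the standard information-theoretic fact that conditioning on a superset cannot increase entropy, actually forces \emph{equality} of the two empirical conditional entropies. Then I would translate this equality back into an equality of maximized log-likelihoods using the well-known identity between empirical entropy and maximized log-likelihood for categorical models.

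First I would recall that the empirical conditional entropy $H_N(X\mid Y)$ is the entropy of $X$ under the empirical joint distribution $\hat p$ on $(X,Y)$ restricted to the conditional. Since $Y' \subset Y$, a standard ``conditioning reduces entropy'' argument applied to $\hat p$ yields $H_N(X\mid Y) \leq H_N(X\mid Y')$. Together with the hypothesis of the lemma, this forces $H_N(X\mid Y) = H_N(X\mid Y')$.

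Next I would use the fact that for a categorical conditional model, the maximum likelihood estimator $\hat\theta_{X\mid Y}$ assigns probability $\hat p(X=x\mid Y=y) = N_{xy}/N_y$ to each cell (with the usual convention on empty cells), so that
\begin{equation*}
-\tfrac{1}{N}\log P(x_N\mid \hat\theta_{X\mid Y}) \;=\; H_N(X\mid Y),
\end{equation*}
and similarly for $Y'$. The equality of empirical conditional entropies derived in the previous step therefore yields the equality of maximized log-likelihoods claimed in the lemma.

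The only subtle point, and the part I would write out most carefully, is the first step: justifying $H_N(X\mid Y) \leq H_N(X\mid Y')$ as a deterministic statement about the empirical distribution rather than about a true underlying distribution. This is just the standard inequality applied to $\hat p$, but it is worth stating explicitly so the reader sees that the hypothesis forces equality rather than being a nontrivial restriction. Once equality of entropies is in hand, the rest of the argument is a one-line identification of $-N H_N(X\mid Y)$ with the maximized log-likelihood of the categorical conditional model.
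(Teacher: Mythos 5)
Your proof is correct and follows essentially the same route as the paper: both rest on the identity $\log P(x_N\mid\hat\theta_{X\mid Y}) = -N\,H_N(X\mid Y)$ together with the fact that conditioning on a superset cannot increase the empirical conditional entropy (which the paper phrases as nonnegativity of the empirical conditional mutual information $I_N(X;Y\setminus Y'\mid Y')$). The only difference is ordering — you derive equality of entropies first and then translate to likelihoods, while the paper translates first — which is immaterial.
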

\begin{lemma}\label{increasing}
Let $r \in \N, r \geq 2$.  The function $k \mapsto \frac{C(N,rk)}{C(N,k)}$ is increasing for every $k \geq 2$.
\end{lemma}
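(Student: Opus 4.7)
My plan is to treat $k$ as a real variable and show that the derivative of $k \mapsto C(N,rk)/C(N,k)$ is strictly positive for all $k > 0$, which in particular yields the claim for integer $k \geq 2$. Using Lemma \ref{Cpolynom}, set $f(k) := C(N,rk) = \sum_{j=1}^N a_j r^j k^j$ and $g(k) := C(N,k) = \sum_{j=1}^N a_j k^j$. Both are polynomials with strictly positive coefficients, so the quotient is differentiable, and it is increasing precisely when $f'(k) g(k) - f(k) g'(k) > 0$.

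Expanding the two products term by term, I would obtain
\begin{equation*}
f'(k) g(k) - f(k) g'(k) = \sum_{i,j=1}^N a_i a_j\, r^j (j-i)\, k^{i+j-1}.
\end{equation*}
This expression does not look manifestly positive because of the $(j-i)$ factor, so the main step is to symmetrize: relabel $(i,j) \leftrightarrow (j,i)$ in a second copy of the sum and add, which gives
\begin{equation*}
2\bigl(f'(k) g(k) - f(k) g'(k)\bigr) = \sum_{i,j=1}^N a_i a_j (j-i)(r^j - r^i)\, k^{i+j-1}.
\end{equation*}
Since $r \geq 2 > 1$, the map $j \mapsto r^j$ is strictly increasing, so $(j-i)$ and $(r^j - r^i)$ always share the same sign; hence each summand is nonnegative, and strictly positive whenever $i \neq j$. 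Because all $a_j > 0$ (Lemma \ref{Cpolynom}) and $N \geq 2$ in any nontrivial case, the sum is strictly positive for every $k > 0$, and the lemma follows.

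I expect the symmetrization trick to be the only real obstacle; once it is applied, the sign comparison $(j-i)(r^j-r^i) \geq 0$ is immediate from the monotonicity of $t \mapsto r^t$, and positivity of the coefficients handed to us by Lemma \ref{Cpolynom} does the rest. No special treatment of the boundary $k = 2$ is needed because the argument gives strict monotonicity on all of $(0,\infty)$.
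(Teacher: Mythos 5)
Your proof is correct and follows essentially the same route as the paper: both compute the numerator $f'(k)g(k)-f(k)g'(k)$ of the derivative of the quotient and show it is a sum of nonnegative terms by pairing the $(i,j)$ and $(j,i)$ contributions so that the factor $(j-i)(r^j-r^i)\geq 0$ appears. Your global symmetrization of the double sum is a cleaner way to organize exactly the cancellation the paper carries out coefficient-by-coefficient (with its $c_t=-c_{i-t+1}$ pairing and the case analysis over $i\leq N$ versus $N<i\leq 2N-1$), so it packages the same argument with less bookkeeping.
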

\noindent We present the proofs of these lemmas in Section \ref{lemmaproofs}.  

\subsection{The main proof}
\begin{theorem}
qNML score is regular.
\end{theorem}
\begin{proof}
We want to show that
$$
Q^{qnml}_N(X\mid U) \geq Q^{qnml}_N(X \mid U \cup V).
$$assuming $H_N(X \mid U) \leq H_N(X \mid U \cup V)$. Using the entropy assumption and Lemma \ref{MLterms} implies that the maximized likelihood terms are equal. In order to prove the claim, it suffices to study the penalty terms, and we want to show that
\begin{align*}
- \Le reg(N,rq) - reg(N,q) \Ri \ &\geq \  -\Le reg(N,rqm) - reg(N,qm) \Ri \\
\log \frac{C(N,rq)}{C(N,q)} \ &\leq \log \frac{C(N,rqm)}{C(N,qm)}.
\end{align*}This holds, since logarithm is an increasing function, and $q \leq qm$, so we can apply Lemma \ref{increasing} to conclude the proof.

\end{proof}

\subsection{Proofs of lemmas}\label{lemmaproofs}
\setcounter{lemma}{0}
\begin{lemma} $C(N,k)$ can be written as a polynomial of $k$, formally
$$
C(N,k) = \sum_{j=1}^N a_j k^j,
$$where $a_j > 0$. 
\end{lemma}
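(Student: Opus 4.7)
The plan is to obtain an explicit combinatorial formula for $C(N,k)$ that exhibits it as a sum over functions $f:[N]\to[N]$ of powers $k^{c(f)}$, where $c(f)$ counts the weakly connected components of the functional digraph of $f$; positivity of the monomial coefficients is then immediate. Starting from the definition
\[
C(N,k) \;=\; \sum_{\sigma : [N] \to [k]} \prod_{i=1}^{k} \Le \frac{n_i(\sigma)}{N} \Ri^{n_i(\sigma)}, \qquad n_i(\sigma) := |\sigma^{-1}(i)|
\]
(with the convention $0^0 = 1$ for empty fibres), I would pull out the factor $N^{-N}$ and focus on $\sum_\sigma \prod_i n_i^{n_i}$.

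The key step is a combinatorial reinterpretation: $n_i^{n_i}$ is the number of self-maps of the fibre $\sigma^{-1}(i)$, so $\prod_i n_i^{n_i}$ equals the number of functions $f:[N]\to[N]$ that preserve the fibres of $\sigma$, i.e.\ satisfy $\sigma \circ f = \sigma$. Exchanging the order of summation over pairs $(\sigma, f)$ with $\sigma \circ f = \sigma$ yields
\[
\sum_{\sigma}\prod_i n_i^{n_i} \;=\; \sum_{f:[N]\to[N]} \bigl|\{\sigma : \sigma \circ f = \sigma\}\bigr|.
\]
The constraint $\sigma \circ f = \sigma$ forces $\sigma$ to be constant on each weakly connected component of the functional digraph $G_f$ (edges $x \to f(x)$), and any such assignment is admissible, so the inner cardinality is exactly $k^{c(f)}$, where $c(f)$ is the number of weakly connected components of $G_f$. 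Combining these steps gives
\[
C(N,k) \;=\; \frac{1}{N^N}\sum_{f:[N]\to[N]} k^{c(f)} \;=\; \sum_{j=1}^{N} a_j\, k^j, \qquad a_j \;=\; \frac{|\{f:[N]\to[N] : c(f) = j\}|}{N^N}.
\]

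To finish the proof I would verify $a_j > 0$ for every $j \in \{1, \ldots, N\}$ by exhibiting one explicit witness---for example, the function that fixes $\{1, \ldots, j-1\}$ pointwise and acts as a single cycle on $\{j, \ldots, N\}$ has exactly $j$ weakly connected components---which gives $a_j \geq 1/N^N > 0$.

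The step I expect to be the main technical obstacle is the combinatorial identification of $\prod_i n_i^{n_i}$ with fibre-preserving self-maps followed by the claim that the $\sigma$'s compatible with a fixed $f$ are precisely those colourings constant on the weakly connected components of $G_f$; both points require some care with the $0^0 = 1$ convention for empty fibres and with the undirected-path characterisation of weak connectivity. Once these are in place, polynomiality and positivity of all the coefficients follow at once, without any recourse to generating functions or the Szpankowski-type recurrence for $C(N,\cdot)$.
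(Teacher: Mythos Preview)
Your proof is correct and takes a genuinely different route from the paper's. The paper invokes a closed-form representation of $C(N,k)$ as a sum of rising factorials in $k$ (citing an external result), and then expands each rising factorial $k^{\overline{l+1}}$ via unsigned Stirling numbers of the first kind to read off explicit positive polynomial coefficients. Your argument is instead purely combinatorial and self-contained: you reinterpret $\prod_i n_i^{\,n_i}$ as a count of fibre-preserving self-maps of $[N]$, swap the order of summation over pairs $(\sigma,f)$, and identify the number of admissible colourings for a fixed $f$ as $k^{c(f)}$. The weak-connectivity step is sound: in a functional digraph every vertex reaches the unique cycle of its component by forward iteration, so $\sigma\circ f=\sigma$ indeed forces $\sigma$ to be constant on each weakly connected component, and conversely any such colouring works. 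The $0^0=1$ convention is harmless because empty fibres contribute no constraints on $f$.

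What each approach buys: the paper's derivation hands you an explicit algebraic expression for $a_j$ in terms of Stirling numbers, which would be useful were finer quantitative control on the coefficients ever needed. Your derivation avoids the external citation entirely and gives the coefficients a transparent combinatorial meaning ($N^N a_j$ counts self-maps of $[N]$ whose functional digraph has exactly $j$ components), from which strict positivity is immediate via your explicit witnesses. For the downstream use in this paper---only $a_j>0$ is needed to show $k\mapsto C(N,rk)/C(N,k)$ is increasing---your argument is the more economical of the two.
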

\begin{proof} \cite{cosco.itsl08} derive the following representation for the normalizing constant
\begin{align*}  
C(N,k) = \sum_{l=0}^{N-1}\frac{(N-1)^{\underline{l}}k^{\overline{l + 1}}}{N^{l+1} \ l!} ,
\end{align*}where $x^{\underline{l}}$ and $x^{\overline{l}}$ denote falling and rising factorials, respectively. 

We utilize the fact that the rising factorial can be represented as polynomial using unsigned Stirling numbers of the first kind (see \cite{Adamchik1997}, for instance)
\begin{align*}
C(N,k) &=  \sum_{l=0}^{N-1}\frac{(N-1)^{\underline{l}}k^{\overline{l + 1}}}{N^{l+1} \ l!} \\
&= \sum_{l=0}^{N-1} b_l \ k^{\overline{l+1}} \\
&= \sum_{l=0}^{N-1} b_l \Le \sum_{j=1}^{l+1}|s(l + 1,j)| \ k^j \Ri \\
&= \sum_{l=0}^{N-1} \Le \sum_{j=1}^{N}b_l \ |s(l + 1,j)| \ k^j \Ri \\
&= \sum_{j=1}^{N} \Le \sum_{l=0}^{N-1} b_l \ |s(l + 1,j)| \ k^j \Ri \\
&= \sum_{j=1}^{N} \Le \sum_{l=0}^{N-1} b_l \ |s(l + 1,j)| \Ri k^j \\
&= \sum_{j=1}^{N} a_j  k^j,
\end{align*}where $s(i,j)$ denotes the (signed) Stirling number of the first kind and
$$
a_j = \Le \sum_{l=0}^{N-1} \frac{(N-1)^{\underline{l}}}{N^{l+1}l!} \ |s(l + 1,j)| \Ri,
$$ $a_j > 0$ for all $j$. On the second row, we denoted $b_l = (N-1)^{\underline{l}}/(N^{l+1} l!)$. On the row 4, we used the property of Stirling numbers: $s(i,j) = 0$ for all $j > i$. 
\end{proof}
\begin{lemma}
Assume $H_N(X \mid Y') \leq H_N(X \mid Y)$, where $Y' \subset Y$. Now $\log P(x_N \mid \hat{\theta}_{X\mid Y} \ ) = \log P(x_N \mid \hat{\theta}_{X\mid Y'})$.
\end{lemma}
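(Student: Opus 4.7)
The plan is to reduce the statement about maximized likelihoods to a statement about empirical conditional entropies, and then to use the standard monotonicity of empirical conditional entropy under enlargement of the conditioning set.

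First I would record the identity that links the two objects. For any conditioning set $Z$, the ML estimator sets $\hat\theta_{X\mid Z}(x\mid z) = N(x,z)/N(z)$, and direct substitution into the log-likelihood gives
\begin{equation*}
\log P(x_N \mid \hat\theta_{X\mid Z}) \;=\; \sum_{x,z} N(x,z) \log \frac{N(x,z)}{N(z)} \;=\; -N\, H_N(X\mid Z).
\end{equation*}
Applying this to $Z=Y$ and $Z=Y'$ reduces the claim to showing $H_N(X\mid Y) = H_N(X\mid Y')$.

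Next I would invoke the elementary fact that empirical conditional entropy is monotone nonincreasing under enlargement of the conditioning set: since $Y'\subset Y$, we always have $H_N(X\mid Y) \leq H_N(X\mid Y')$. This is the empirical counterpart of ``conditioning reduces entropy'' and follows from the nonnegativity of empirical conditional mutual information, $I_N(X;Y\setminus Y'\mid Y') \geq 0$, which in turn is just a sum of per-stratum KL divergences between the empirical joint and the product of its marginals. Combined with the hypothesis $H_N(X\mid Y') \leq H_N(X\mid Y)$, this forces the equality $H_N(X\mid Y) = H_N(X\mid Y')$, and the identity above then transfers this equality to the maximized log-likelihoods, finishing the proof.

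I do not foresee a real obstacle here; the content is essentially bookkeeping plus a standard monotonicity fact. The only point that needs a sentence of care is the monotonicity of empirical conditional entropy, which should be stated explicitly rather than invoked as folklore, since the whole force of the lemma comes from the hypothesis pushing against this inequality until it collapses to equality.
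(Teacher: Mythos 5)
Your proposal is correct and follows essentially the same route as the paper: both reduce the claim to the identity $\log P(x_N\mid\hat\theta_{X\mid Z})=-N\,H_N(X\mid Z)$ and then force equality from the hypothesis via the nonnegativity of the empirical conditional mutual information $I_N(X;Y\setminus Y'\mid Y')$. Your version is slightly more careful in conditioning on $Y'$ rather than $Y$ in that mutual information term (the paper writes $I_N(X;Z\mid Y)$, which appears to be a typo), but the substance is identical.
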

\begin{proof}
We can write the logarithm of the maximized likelihood, \\ $\log P(x_N \mid \hat{\theta}_{X\mid Y} \ )$, as follows \citep{KOLLER}
\begin{align*}
\log P(x_N \mid \hat{\theta}_{X\mid Y} \ ) & = -N \Le H_N(X) - I_N(X;Y) \Ri \\
&= -N \  H_N(X\mid Y),
\end{align*}where $I_N(\cdot;\cdot)$ is the empirical mutual information. This implies that the assumption
$$
H_N(X \mid Y') \leq H_N(X \mid Y)
$$ is equivalent to
$$
\log P(x_N \mid \hat{\theta}_{X\mid Y} \ ) \leq \log P(x_N \mid \hat{\theta}_{X\mid Y'} \ ) .
$$Actually we must have the equality holding in the above expression, since
$$
H_N(X \mid Y') < H_N(X \mid Y)
$$ would imply that
$$
I_N(X ; Z \mid Y ) < 0,
$$where $Z = Y \setminus Y'$, which is impossible.
\end{proof}
\begin{lemma}
Let $r \in \N, r \geq 2$.  The function $k \mapsto \frac{C(N,rk)}{C(N,k)}$ is increasing for every $k \geq 2$.
\end{lemma}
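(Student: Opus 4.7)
The plan is to work with logarithmic derivatives. Write $P(k) := C(N,k)$; by Lemma~\ref{Cpolynom}, $P$ is a polynomial with strictly positive coefficients, so $P$ and $P'$ are smooth and strictly positive on $(0,\infty)$. I would extend the claim to all real $k > 0$ and establish the stronger statement that $\log\frac{P(rk)}{P(k)}$ is non-decreasing in $k$, from which the lemma follows immediately.

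First, a direct computation gives
\[
\frac{d}{dk}\log\frac{P(rk)}{P(k)} \;=\; \frac{rP'(rk)}{P(rk)} - \frac{P'(k)}{P(k)} \;=\; \frac{h(rk) - h(k)}{k},
\]
where $h(t) := tP'(t)/P(t)$. Since $r \geq 2$ implies $rk \geq k$, it suffices to prove that $h$ is non-decreasing on $(0,\infty)$.

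For that final reduction, I would interpret $h(t) = \sum_{j=1}^N j\,w_j(t)$ as the mean of $j$ under the probability weights $w_j(t) := a_j t^j / P(t)$ (well-defined because all $a_j > 0$). A short computation using $w_j'(t) = (w_j(t)/t)(j - h(t))$ yields
\[
t\,h'(t) \;=\; \sum_{j=1}^N (j - h(t))^2 w_j(t) \;\geq\; 0,
\]
so $t h'(t)$ is the variance of $j$ under these weights; equivalently, one can multiply through by $P(t)^2$ and symmetrize in $(j,l)$ to obtain the manifestly non-negative kernel $\tfrac{1}{2}\sum_{j,l}(j-l)^2 a_j a_l t^{j+l}$.

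The only substantive step is this last identity. It is routine in the sense that it is essentially the well-known fact that the derivative of the mean of an exponential family in its natural parameterization equals the variance, but it is where the positivity of the $a_j$ really enters the argument. Once $h' \geq 0$ is in hand, the chain $h$ non-decreasing $\Rightarrow \frac{d}{dk}\log\frac{P(rk)}{P(k)} \geq 0 \Rightarrow \frac{C(N,rk)}{C(N,k)}$ non-decreasing completes the proof.
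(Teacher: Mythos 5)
Your proof is correct, and it takes a genuinely different route from the paper. The paper applies the quotient rule directly to $C(N,rk)/C(N,k)$, expands the numerator $\bigl(\tfrac{d}{dk}C(N,rk)\bigr)C(N,k)-\bigl(\tfrac{d}{dk}C(N,k)\bigr)C(N,rk)$ as an explicit polynomial in $k$, and shows every coefficient is non-negative by pairing the term indexed by $l$ with the one indexed by $i-l+1$ (using $c_t=-c_{i-t+1}$), with separate case analyses for $i\leq N$, $i=2N-1$, and $N<i<2N-1$. You instead pass to the logarithmic derivative and reduce everything to the monotonicity of $h(t)=tP'(t)/P(t)$, which you obtain from the identity $t\,h'(t)=\sum_j (j-h(t))^2 w_j(t)$ --- equivalently, the observation that $s\mapsto\log P(e^s)$ is a log-sum-of-exponentials and hence convex, so its derivative $h(e^s)$ is non-decreasing. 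Both arguments rest on the same input from Lemma \ref{Cpolynom}, namely $a_j>0$, which in your version is exactly what makes the $w_j(t)$ a probability distribution and the variance identity yield non-negativity; your symmetrized form $\tfrac{1}{2}\sum_{j,l}(j-l)^2a_ja_lt^{j+l}$ is in fact what the paper's pairing argument computes by hand. What your approach buys is brevity and generality: it avoids all of the index bookkeeping and case splits, works for any polynomial or power series with non-negative coefficients and any scaling factor $r>1$, and makes transparent why the result is true. What the paper's approach buys is that it stays entirely within elementary polynomial algebra. One small remark: you establish that the quotient is non-decreasing rather than strictly increasing; this is all that the regularity theorem actually uses, and for $N\geq 2$ strictness follows from your own identity since the variance of $j$ under $w_j(t)$ is then strictly positive (the paper's conclusion that ``the derivative is positive'' likewise needs at least one strictly positive coefficient, so the two proofs are on equal footing here).
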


\begin{proof}
Lemma \ref{Cpolynom} lets us to write
\begin{equation}\label{C1}
C(N,k) = \sum_{j=1}^{N} a_j k^j
\end{equation}and, similarly,
\begin{equation}\label{C2}
C(N,rk) = \sum_{j=1}^{N} a_j  r^jk^j.
\end{equation}

In the following, we assume that $k$ is a real number. From (\ref{C1}) and (\ref{C2}), it is easy see that the derivative of the quotient, $d/dk (C(N,rk)/C(N,k))$, will be a ratio of two polynomials of $k$. Our goal is to show that the polynomial in the numerator has positive coefficients, which will guarantee the positivity of derivative for every $k > 0$, and thus imply that the original function is increasing (polynomial in the denominator is squared and non-zero for $k > 0$, so it can be ignored).  

Derivatives of (\ref{C1}) and (\ref{C2}) are obtained easily:
\begin{align*}
\frac{d}{dk}C(N,k) &= \sum_{j=1}^{N}ja_jk^{j-1} \\
&= \sum_{j=0}^{N-1}(j+1) a_{j+1}k^{j}
\end{align*}and
\begin{align*}
\frac{d}{dk}C(N,rk) &= \sum_{j=1}^{N}ja_jr^jk^{j-1} \\
&= \sum_{j=0}^{N-1}(j+1)a_{j+1}r^{j+1}k^{j}. 
\end{align*}Consider next the products found in the derivative of the quotient. We obtain
\begin{align*}
\Le\frac{d}{dk}C(N,rk)\Ri C(N,k) &= \Le \sum_{j=0}^{N-1}(j+1)a_{j+1}r^{j+1}k^{j} \Ri \Le \sum_{l=1}^{N} a_l  k^l \Ri \\
&= \sum_{i = 1}^{2N-1} \Le \sum_{j+l = i} (j+1)a_{j+1}r^{j+1}a_l  \Ri k^i
\end{align*}and
\begin{align*}
\Le\frac{d}{dk}C(N,k)\Ri C(N,rk) &= \Le \sum_{j=0}^{N-1}(j+1)a_{j+1}k^{j} \Ri \Le \sum_{l=1}^{N} a_l  r^lk^l  \Ri \\
&= \sum_{i = 1}^{2N-1} \Le \sum_{j+l = i}(j+1)a_{j+1}a_l  r^l   \Ri k^i.
\end{align*}
Subtracting these two expression yields
\begin{align*}
&\Le\frac{d}{dk}C(N,rk)\Ri C(N,k)-\Le\frac{d}{dk}C(N,k)\Ri C(N,rk) \\&= \sum_{i = 1}^{2N-1} \Le \sum_{j+l = i} (j+1)a_{j+1}r^{j+1}a_l  \Ri k^i - \sum_{i = 1}^{2N-1} \Le \sum_{j+l = i}(j+1)a_{j+1}a_l  r^l   \Ri k^i \\
&= \sum_{i = 1}^{2N-1} \Le \sum_{j+l = i}(j+1)a_{j+1}a_l  (r^{j+1}- r^l)   \Ri k^i
\end{align*}which is the polynomial in the numerator of the derivative of $C(N,rk)/C(N,k)$.
Next, we study the coefficient of $k^i$, if $i \leq N$
\begin{align*}
\sum_{j+l = i}(j+1)a_{j+1}a_l  (r^{j+1}- r^l) &= \sum_{l=1}^{i}(i-l+1)a_{i-l+1}a_l  (r^{i-l+1}- r^l) \\
& = \sum_{l=1}^{i}(i-l+1)c_l \\
&= \sum_{t =1}^{\lfloor i / 2 \rfloor}(i-t+1)c_t + (i-(i-t + 1)+1)c_{i-t+1} \\
&=  \sum_{t =1}^{\lfloor i / 2 \rfloor}(i-t+1)c_t + tc_{i-t+1} \\
&=  \sum_{t =1}^{\lfloor i / 2 \rfloor}(i-t+1)c_t - tc_{t} \\
&= \sum_{t =1}^{\lfloor i / 2 \rfloor}(i-2t+1)c_t.
\end{align*} On the first row, we re-wrote sum using only one running index. On the second row we denoted $c_l =a_{i-l+1}a_l  (r^{i-l+1}- r^l)$. On the third row, we re-arranged the sum so that we are summing over pairs of terms of the original sum: the first and the last term, the second and the second to last, and so on.  This resulting sum has $\lfloor i / 2 \rfloor$ terms. We have to use the floor-function since if $i$ is odd, there exists an index $l'$ in the original sum such that $r^{i-l'+1}-r^{l'} = 0$. On the fifth row, we make use of the identity $c_t = -c_{i-t+1}$ which is straightforward to verify. From the last row, we can observe that every term of the sum is positive since $i-2t+1$ and $r^{i-t+1}- r^t$ are both positive if $t \leq (i+1)/2$ which holds since $t$ ranges from $1$ to $\lfloor i / 2 \rfloor$.

Let us now consider the situation where $n < i \leq 2N-1$. We start with the special case where $i = 2N-1$. Then, we have only one term in the sum
\begin{align*}
\sum_{j+l = i}(j+1)a_{j+1}a_l  (r^{j+1}- r^l) &= \sum_{l=N}^{N}(2N-1-l+1)a_{2N-1-l+1}a_l  (r^{2N-1-l+1}- r^l) \\
&= Na_{N}a_N  (r^{N}- r^N) \\
& = 0.
\end{align*}Now, let $N < i < 2N-1$, we follow a similar procedure as before to manipulate the sum
\begin{align*}
\sum_{j+l = i}(j+1)a_{j+1}a_l  (r^{j+1}- r^l) &= \sum_{l=i-N + 1}^{N}(i-l+1)a_{i-l+1}a_l  (r^{i-l+1}- r^l) \\
& = \sum_{l=i-N + 1}^{N}(i-l+1)c_l \\
& = \sum_{t=1}^{2N-i}(N-t+1)c_{i-N+t} \\
& = \sum_{t=1}^{\lfloor N - i/2 \rfloor}(N-t+1)c_{i-N+t}\\
&+  (N-(2N-i-t+1)+1)c_{i-N+(2N-i-t+1)} \\
&= \sum_{t=1}^{\lfloor N - i/2 \rfloor}(N-t+1)c_{i-N+t} +  (i-N+t)c_{N-t+1} \\
&= \sum_{t=1}^{\lfloor N - i/2 \rfloor}(N-t+1)c_{i-N+t} -  (i-N+t)c_{i-N+t} \\
&= \sum_{t=1}^{\lfloor N - i/2 \rfloor}(N-t+1-(i-N+t))c_{i-N+t} \\
&= \sum_{t=1}^{\lfloor N - i/2 \rfloor}(2N-i-2t+1)c_{i-N+t}.
\end{align*}It is now easy to verify that $(2N-i-2t+1)$ and $c_{i-N+t}$ are positive if $t \leq N - (i-1)/2$ which holds since $t$ ranges from $1$ to $\lfloor N - i/2 \rfloor$. The floor function is again used when we sum over pairs of terms since if $i$ is odd there is a zero-term. 
Since all the coefficients are non-negative and the $k \geq 2$, the derivative is positive. This implies that the original function is increasing.
\end{proof}

\subsection{fNML is Regular}
In this section, we will show that fNML score is regular. Recall first the definition of fNML local score
\begin{equation}
Q^{fnml}_N(X  \vert \ U) = \log P(x_N \mid \hat{\theta}_{X\mid U} \ ) - \sum_{j = 1}^{r_U} reg(N_j,r_X),
\end{equation}where $r_X$ is the number of categories for $X$, $r_U$ denotes the number of possible configurations of variables in $U$, and $N_j$ is the number of times the $j^\text{th}$ configuration is observed in our samples $u_N$. Note that $reg(N_j,r_X) = 0$ for any configuration not observed in our sample. 
fNML criterion differs from qNML only by how the penalty term is defined. We can follow a highly similar strategy in order to prove the regularity for fNML. To this end, we need the following Lemma:
\begin{lemma}\label{regineq}
Let $N = N_1 + N_2$ and $r \in\N, r \geq 2$. Now,
$$
reg(N,r) \leq reg(N_1,r) + reg(N_2,r).
$$
\end{lemma}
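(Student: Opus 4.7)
The plan is to exponentiate the claim, so that the task becomes proving the sub-multiplicativity
\[
C(N, r) \;\leq\; C(N_1, r)\, C(N_2, r), \qquad N = N_1 + N_2,
\]
and then to exploit a pointwise sub-multiplicativity of the maximized likelihood under splitting the data sequence. I would begin by writing the normalizing constant in its defining form as a sum over all length-$N$ sequences $x^N$ of outcomes in $\{1,\dots,r\}$,
\[
C(N,r) \;=\; \sum_{x^N} P\!\left(x^N \mid \hat\theta(x^N)\right),
\]
where $\hat\theta(x^N)$ is the ML estimate computed from $x^N$ and, for the multinomial, equals the vector of empirical frequencies.

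The key observation is that any length-$N$ sequence can be split as $x^N = (x^{N_1}, x^{N_2})$ with the first $N_1$ symbols forming $x^{N_1}$ and the last $N_2$ forming $x^{N_2}$. Then, because $\hat\theta(x^{N_1})$ (respectively $\hat\theta(x^{N_2})$) is by definition the parameter that maximizes the likelihood of the corresponding subsequence, we get the pointwise bound
\[
P\!\left(x^N \mid \hat\theta(x^N)\right)
\;=\; P\!\left(x^{N_1} \mid \hat\theta(x^N)\right)\cdot P\!\left(x^{N_2} \mid \hat\theta(x^N)\right)
\;\leq\; P\!\left(x^{N_1} \mid \hat\theta(x^{N_1})\right)\cdot P\!\left(x^{N_2} \mid \hat\theta(x^{N_2})\right),
\]
which holds term by term, for every single sequence, independently of any entropy or frequency considerations.

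Once this pointwise inequality is in hand, summing over all sequences $x^N$ and using that every pair $(x^{N_1}, x^{N_2})$ with lengths $N_1$ and $N_2$ arises from exactly one length-$N$ sequence under the split yields
\[
C(N,r) \;\leq\; \sum_{x^{N_1}} \sum_{x^{N_2}} P\!\left(x^{N_1} \mid \hat\theta(x^{N_1})\right)\, P\!\left(x^{N_2} \mid \hat\theta(x^{N_2})\right) \;=\; C(N_1,r)\, C(N_2,r).
\]
Taking logarithms and recalling $reg(N,r) = \log C(N,r)$ delivers the lemma.

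The only step that requires some care is the pointwise bound, and the main obstacle is mainly notational: one must be careful that the ML estimate $\hat\theta$ on the concatenated sequence factorizes cleanly between the two parts (which it does for the multinomial, since the likelihood factorizes over samples), and that the passage from sums over $x^N$ to double sums over $(x^{N_1}, x^{N_2})$ is a genuine bijection with no double-counting. I do not expect any technical difficulty beyond this; in particular, the explicit polynomial representation of $C(N,r)$ from Lemma \ref{Cpolynom} is not needed here.
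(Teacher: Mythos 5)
Your proposal is correct and is essentially the paper's own argument, just run in the opposite direction: the paper starts from $C(N_1,r)C(N_2,r)$ and lower-bounds it by replacing the subsequence ML estimates with the full-sequence estimate, whereas you start from $C(N,r)$ and upper-bound each term by splitting the sequence and letting each half use its own ML estimate. The two ingredients — the i.i.d.\ factorization of the likelihood across the split and the defining maximality of the subsequence ML estimators — are identical in both versions.
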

\begin{proof}
We start with the definition of $C(N_1,r) = \exp(reg(N_1,r))$,
\begin{align*}
C(N_1,r)C(N_2,r) &= \sum_{x_{N_1}}P(x_{N_1} \vert \hat{\theta}(x_{N_1} ))\sum_{x_{N_2}}P(x_{N_2} \vert \hat{\theta}(x_{N_2} ))\\ & \geq
\sum_{x_{N_1}}P(x_{N_1} \vert \hat{\theta}(x_{N} ))\sum_{x_{N_2}}P(x_{N_2} \vert \hat{\theta}(x_{N} )) \\ &=
\sum_{x_{N}}P(x_{N} \vert \hat{\theta}(x_{N} )) \\ & = C(N,r).
\end{align*}Taking the logarithm on both sides yields our claim. On the second row, we used the definition of maximum likelihood parameters: the probability of the data vector $x_{N_1}$ is maximized under parameters $\hat{\theta}(x_{N_1})$, and if we use different parameters, $\hat{\theta}(x_{N})$, the probability can only go down or stay the same. On the third row, we used the i.i.d., assumption, which allows us to take the single sum over data vectors of length $N = N_1 + N_2$.
\end{proof}
Now we can proceed to the actual proof.
\begin{theorem}
fNML score is regular.
\end{theorem}
\begin{proof}
Using the entropy assumption and Lemma \ref{C2}, we can ignore the maximized likelihood terms and study the penalty terms. We want show that
\begin{equation}\label{fnmlPenalty}
\sum_{j = 1}^{r_{Y'}} reg(N'_j,r_X) \leq \sum_{l = 1}^{r_{Y}} reg(N_l,r_X),
\end{equation}where we used $r_{Y'}$ and $r_Y$ to denote the numbers of possible configurations for variables in $Y'$ and $Y$, respectively. Also, $N_j$ refers to the number times the $j$:th configuration of variables $Y$ is observed in our sample, and $N'_j$ denotes the same for $Y'$. 

Now, since $r_{Y'} \leq r_Y$, and we know that $\sum_j N'_j = \sum_l N_l = N$, we can just apply Lemma (\ref{regineq}) multiple times to conclude our proof. 
\end{proof}

\newpage
\section{qNML coincides with NML for many models}

\subsection{qNML Equals NML for Many Models}
The fNML criterion can be seen as a computationally feasible
approximation of the more desirable NML criterion.  However, the fNML
criterion equals the NML criterion only for the Bayesian network
structure with no arcs.  We will next show that the qNML criterion
equals the NML criterion for all the networks $G$ whose connected
components tournaments (i.e., complete directed acyclic subgraphs of
$G$).

\begin{theorem}
If $G$ consists of $C$ connected components $(G^1,\ldots,G^C)$ with
variable sets $(V^1,\ldots,V^C)$, then $\log P_{NML}(D;G) = s^{qNML}(D;G)$
for all data sets $D$.
\end{theorem}
\begin{proof}
We first show that the NML-criterion for a Bayesian network
decomposes by the connected components.

Because the maximum likelihood for the data $D$ decomposes,
we can write
\begin{eqnarray}
  \lefteqn{P_{NML}(D;G)} \nonumber \\
  && = \frac{P(D;\hat\theta(D),G)}
            {\sum_{D'_{V_1}}\ldots \sum_{D'_{V_C}}\prod_{c=1}^C P(D'_{V^c};\hat\theta(D'_{V^c}),G)} \nonumber \\
            && = \frac{\prod_{c=1}^C P(D_{V^c};\hat\theta(D_{V^c}),G)} 
            {\prod_{c=1}^C\sum_{D'_{V^c}}P(D'_{V^c};\hat\theta(D'_{V^c}),G)}.\nonumber \\
            && = \prod_{c=1}^CP_{NML}(D_{V^c};G).
\end{eqnarray}

Clearly, the qNML score also decomposes by the connected components,
so it remains to show that if the (sub)network $G$ is a tournament,
then for any data $D$, $s^{qNML}(D;G)=\log P_{NML}(D;G)$.  Due to the
score equivalence of the NML criterion and the qNML criterion, we may
pick a tournament $G$ such that the linear ordering defined by $G$
matches the ordering of the data columns, i.e., $i<j$ implies $G_i
\subset G_j$. Now from the definition~(8) of the qNML
criterion (in the main paper), we see that for the tournament $G$, the sum telescopes
leaving us with $s^{qNML}(D;G) = \log P^1_{NML}(D_G;G)$, thus
it is enough to show that $P^1_{NML}(D;G)=P_{NML}(D;G)$.  This
follows, since for any data vector $x$ in data $D$, we have
$P^1(x;\hat\theta(D),G) = P(x;\hat\theta(D),G)$, where $P^1$ denotes
the model that takes $n$-dimensional vectors to be values of the
single (collapsed) categorical variable.  Denoting prefixes of data
vector $x$ by $x^{:i}$, and the number of times such a prefix appears
on $N$ rows $[d_1,\ldots,d_N]$ of the $N\times n$ data matrix $D$ by
$N_D(x^{:i})$, (so that $N_D(x^{:0})=N)$, we have
\begin{eqnarray}
  P(D;\hat\theta(D),G) &=& \prod_{j=1}^{N}P(d_j;\hat\theta(D),G) \nonumber \\
  &=& \prod_{j=1}^{N} \prod_{i=1}^{n} \frac{N_D(d_j^{:i})}{N_D(d_j^{:i-1})} \nonumber\\
  &=& \prod_{j=1}^{N} \frac{N_D(d_j^{:n})}{N} \nonumber \\
  &=& P^1(D;\hat\theta^1(D),G).
\end{eqnarray}
Since both $P^1_{NML}$ and $P_{NML}$ are defined in terms of the
maximum likelihood probabilities, the equality above implies the equality
of these distributions.
\end{proof}

Equality established, we would like to still state the number $a(n)$
of different $n$-node networks whose connected components are
tournaments.  We start by generating all the $p(n)$ integer partitions
i.e. ways to partition $n$ labelled into parts with different
size-profiles.  For example, for $n=4$, we have $p(4)=5$ partition
profiles $[[4], [3, 1], [2, 2], [2, 1, 1], [1, 1, 1, 1]]$. Each of
these partition size profiles corresponds to many different networks,
apart from the last one that corresponds just to the empty network.
We count number of networks for one such partition size-profile (and
then later sum these counts up).  For any such partition profile
$(p_1,\ldots,p_k)$ we can count the ways we can assign the nodes to
different parts and then order each part. This leads to the product
$n\choose p_1$$p_1$!${n-p_1}\choose p_2$$p_2!$${n-p_1-p2}\choose
p_3$$p_3!$$\ldots$${n-\sum_{j=1}^{k-1}p_j}\choose p_k$$p_k!$. However,
the order of different parts of the same size does not matter, so for
all groups of parts having the same size, we have to divide the
product above by the factorial of the size of such group. Notice also
that the product above telescopes, leaving us a formula for OEIS
sequence A000262\footnote{https://oeis.org/A000262} as
described by Thomas Wieder:

\textit{With $p(n) =$ the number of integer partitions of $n$, $d(i)$ = the
number of different parts of the $i^\text{th}$ partition of $n$,
$m(i,j) =$ multiplicity of the $j^\text{th}$ part of the $i^\text{th}$
partition of $n$, one has:}
$$
a(n) = \sum_{i=1}^{p(n)} \frac{n!}{\prod_{j=1}^{d(i)} m(i,j)!}.
$$
For example, $a(4)=\frac{4!}{1!}+\frac{4!}{1!1!}+\frac{4!}{2!}+\frac{4!}{1!2!}+\frac{4!}{4!}=73$. In general this sequence grows rapidly; $1, 1, 3, 13, 73, 501, 4051, 37633, 394353, 4596553, \ldots$.
\newpage
\section{Bayesian Dirichlet quotient score}

\cite{Suzuki2017} independently suggested a Bayesian Dirichlet
quotient score BDq, in which the NML-distributions
$P^1_{NML}(D_{i,G_i};G)$ and $P^1_{NML}(D_{G_i};G)$ of the qNML are
replaced by the Bayesian marginal likelihoods
$P^1(D_{i,G_i};G,\alpha)$ and $P^1_{NML}(D_{G_i};G,\alpha)$ using
parameter prior $\theta_{ij} \sim
Dirichlet(\alpha,\ldots,\alpha)$. Suzuki further suggested using the
Jeffreys' prior with $\alpha=\frac{1}{2}$. While this is a
popular choice in statistics, it is also possible to argue for other
values of $\alpha$ like $\alpha=\frac{1}{2}-\frac{\ln 2}{2\ln
  N}$~\citep{watanabe15a} and $\alpha=\frac{1}{3}$~\citep{jaasaari18}.

Our initial experiments indicate that model selection by BDq is still
highly sensitive to the hyperparameter $\alpha$. On the next 5 pages we
present results for learning the posterior distribution of the network
structures for the four predictor variables (discretized to three
equal width bins) in the iris data (N=150). The structure prior is
assumed to be uniform. Since BDq is score equivalent, we only include
one representative from each equivalence class. The images also show
the most probable structure and the predictive distribution given by
the most probable structure and the predictive distribution when
marginalizing over all the structures.

One can see how the structure posterior is quite sensitive to $\alpha$
and the selected network is different for $\alpha$-values $0.2$, $0.3$
and $0.5$.

\includepdf[pages={1-5},angle=90]{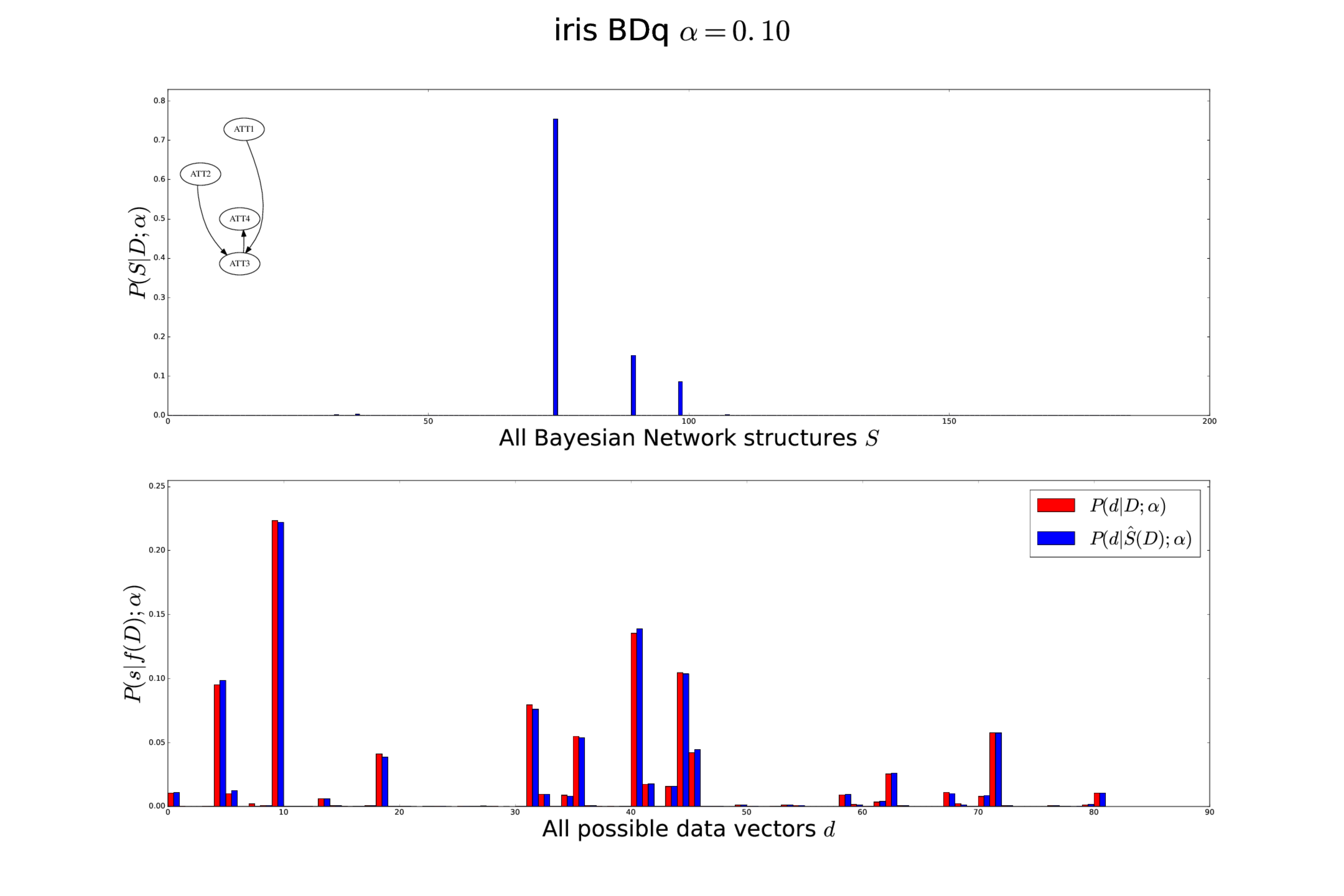}

\newpage
\section{Prediction and Parsimony}

We took 20 UCI data sets and created 1000 permutations of each data set.
We then took the first $x$\% of each permuted data set ($x$ in
${10,20,30,\ldots,90}$) as training data and used the remaining part as
test data. We used exact structure learning to learn the best
scoring model using BDeu ($\alpha=1$), BIC, fNML and qNML,
recording the number of parameters in the learnt
models and the average predictive log-probability of each test
vector. These numbers are collected for 1000 different data
permutations, the average and variance of which appear on the following
pages, one page per data set.

\includepdf[pages={1-20}, angle=90]{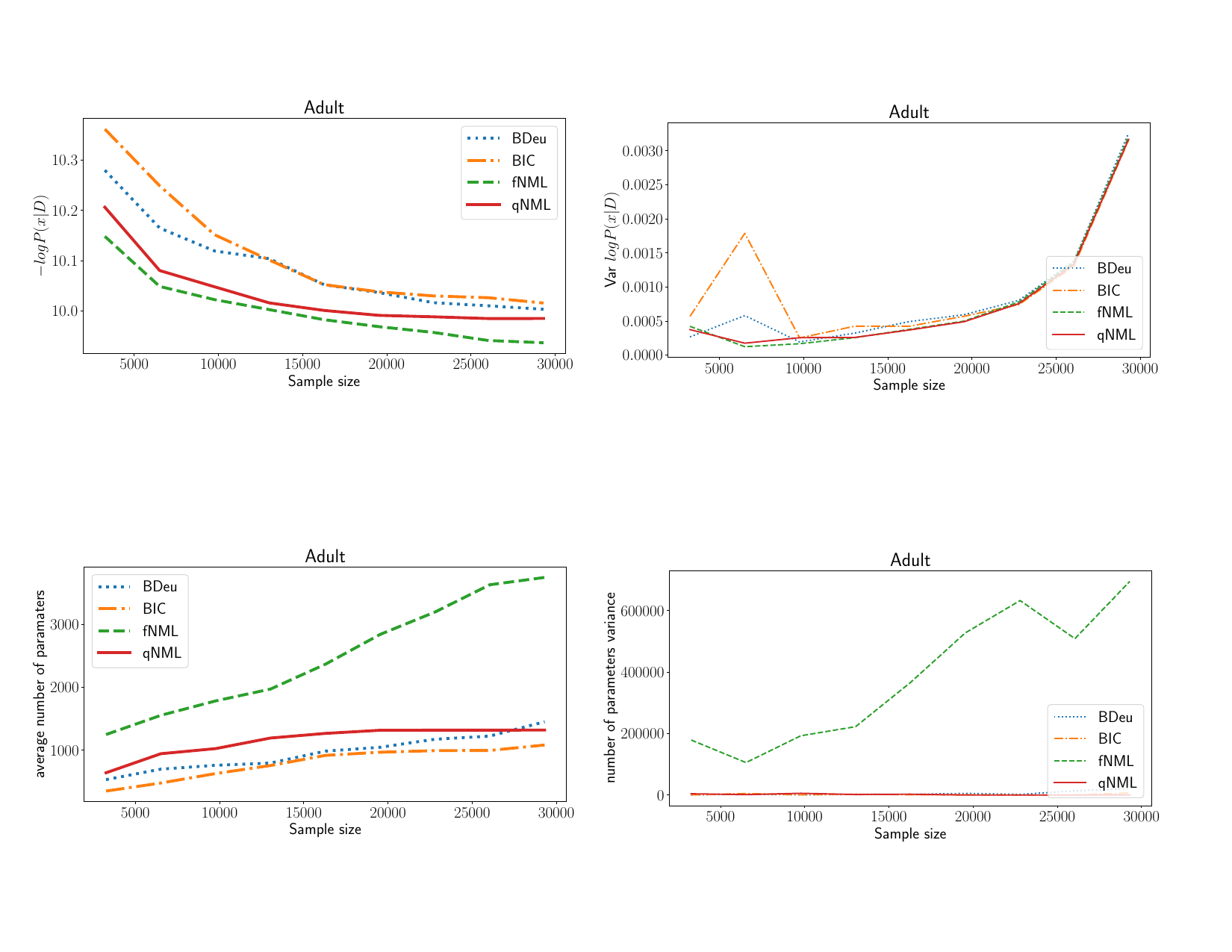}

\end{appendices}

\end{document}